\theoremstyle{plain}
\newtheorem{theorem}{Theorem}[section]
\newtheorem{proposition}[theorem]{Proposition}
\newtheorem{lemma}[theorem]{Lemma}
\newtheorem{corollary}[theorem]{Corollary}
\theoremstyle{definition}
\newtheorem{definition}[theorem]{Definition}
\newtheorem{assumption}[theorem]{Assumption}
\theoremstyle{remark}
\newtheorem{remark}[theorem]{Remark}
\def\1{\bm{1}}
\DeclareMathAlphabet{\mathsfit}{\encodingdefault}{\sfdefault}{m}{sl}
\SetMathAlphabet{\mathsfit}{bold}{\encodingdefault}{\sfdefault}{bx}{n}
\newtheorem{assum}{Assumption}
\pgfplotsset{compat=1.10}
       \tikzset{
        hatch distance/.store in=\hatchdistance,
        hatch distance=10pt,
        hatch thickness/.store in=\hatchthickness,
        hatch thickness=2pt,
        hatch color/.store in=\hatchcolor,      
        hatch color=black,                      
    }
\pgfqpoint{\hatchdistance}{\hatchdistance}}
\newcommand{\youssef}[1]{\textbf{\textcolor{blue}{Youssef: #1}}}
\newcommand\rankone[1]{\colorbox{lightgray}{\textcolor{black}{\textbf{#1}}}}
\newcommand\ranktwo[1]{\colorbox{lightgray}{\textcolor{black}{\textbf{#1}}}}
\newcommand\rankthree[1]{\colorbox{lightgray}{\textcolor{black}{\textbf{#1}}}}
\author[a]{Apoorva Nitsure}
\author[a]{Youssef Mroueh}
\author[a]{Mattia Rigotti}
\author[a,b]{Kristjan Greenewald}
\author[a]{Brian Belgodere}
\author[b]{Mikhail Yurochkin}
\author[a]{Jiri Navratil}
\author[a]{Igor Melnyk}
\author[a]{Jerret Ross}
\affil[a]{IBM Research}
\affil[b]{MIT-IBM Watson AI Lab}
 \newtcolorbox{assbox}{colback=black!5!white,colframe=black!75!black}
  \newtcolorbox{thmbox}{colback=blue!5!white,colframe=black!75!black}
\icmltitlerunning{Risk Aware Benchmarking of Large Language Models}
\begin{document}

\twocolumn[
\icmltitle{Risk Aware Benchmarking of Large Language Models}



\icmlsetsymbol{equal}{*}

\begin{icmlauthorlist}
\icmlauthor{Apoorva Nitsure}{yyy}
\icmlauthor{Youssef Mroueh}{yyy}
\icmlauthor{Mattia Rigotti}{yyy}
\icmlauthor{Kristjan Greenewald}{yyy,comp}
\icmlauthor{Brian Belgodere}{yyy}
\icmlauthor{Mikhail Yurochkin}{yyy,comp}
\icmlauthor{Jiri Navratil}{yyy}
\icmlauthor{Igor Melnyk}{yyy}
\icmlauthor{Jarret Ross}{yyy}
\end{icmlauthorlist}

\icmlaffiliation{yyy}{IBM Research}
\icmlaffiliation{comp}{MIT-IBM Watson AI Lab}

\icmlcorrespondingauthor{Apoorva Nitsure}{Apoorva.Nitsure@ibm.com}
\icmlcorrespondingauthor{Youssef Mroueh}{mroueh@us.ibm.com}

\icmlkeywords{Machine Learning, ICML}

\vskip 0.3in
]



\printAffiliationsAndNotice{} 

\begin{abstract}
We propose a distributional framework for benchmarking socio-technical risks of foundation models with quantified statistical significance. Our approach hinges on a new statistical relative testing  based on first and second order stochastic dominance of real random variables. We show that the second order statistics in this test are linked to mean-risk models commonly used in econometrics and mathematical finance to balance risk and utility when choosing between alternatives. Using this framework, we formally develop a risk-aware approach for foundation model selection given guardrails quantified by specified metrics. Inspired by portfolio optimization and selection theory in mathematical finance, we define a \emph{metrics portfolio} for each model as a means to aggregate a collection of metrics, and perform model selection based on the stochastic dominance of these portfolios. The statistical significance of our tests is backed theoretically by an  asymptotic analysis via central limit theorems instantiated in practice via  a bootstrap variance estimate. 
We use our framework to compare various large language models regarding risks related to drifting from instructions and outputting toxic content.    
\end{abstract}


\section{Introduction }

Foundation models such as large language models (LLMs) have shown remarkable capabilities redefining the field of artificial intelligence. At the same time,  they present  pressing and challenging socio-technical risks regarding the trustworthiness of their outputs and their alignment with human values and ethics \citep{bommasani2021opportunities}. Evaluating LLMs is therefore a multi-dimensional problem, where those risks are benchmarked across diverse tasks and domains \citep{chang2023survey}.

In order to quantify these risks, \cite{liang2022holistic,wang2023decodingtrust,huang2023trustgpt,sun2024trustllm} proposed  benchmarks  of automatic metrics  for probing the trustworthiness of  LLMs. These metrics include accuracy, robustness, fairness, toxicity of the outputs, etc.  Human evaluation benchmarks can be even more nuanced, and are often employed when tasks surpass the scope of standard metrics. Notable benchmarks based on human and automatic evaluations include, among others, Chatbot Arena \citep{zheng2023judging}, HELM \citep{bommasani2023holistic}, MosaicML’s Eval, Open LLM Leaderboard \citep{open-llm-leaderboard}, and BIG-bench \citep{srivastava2022beyond}, each catering to specific evaluation areas such as chatbot performance, knowledge assessment, and domain-specific challenges. Traditional metrics, however, sometimes do not correlate well with human judgments. Aiming for a better alignment with human judgments, some approaches utilize ChatGPT/GPT-4 for natural language generation evaluations \citep{liu2023gpteval, zhang2023summit, hada2023large}.



\begin{figure*}[t!]
\centering
           \includegraphics[scale=0.5]{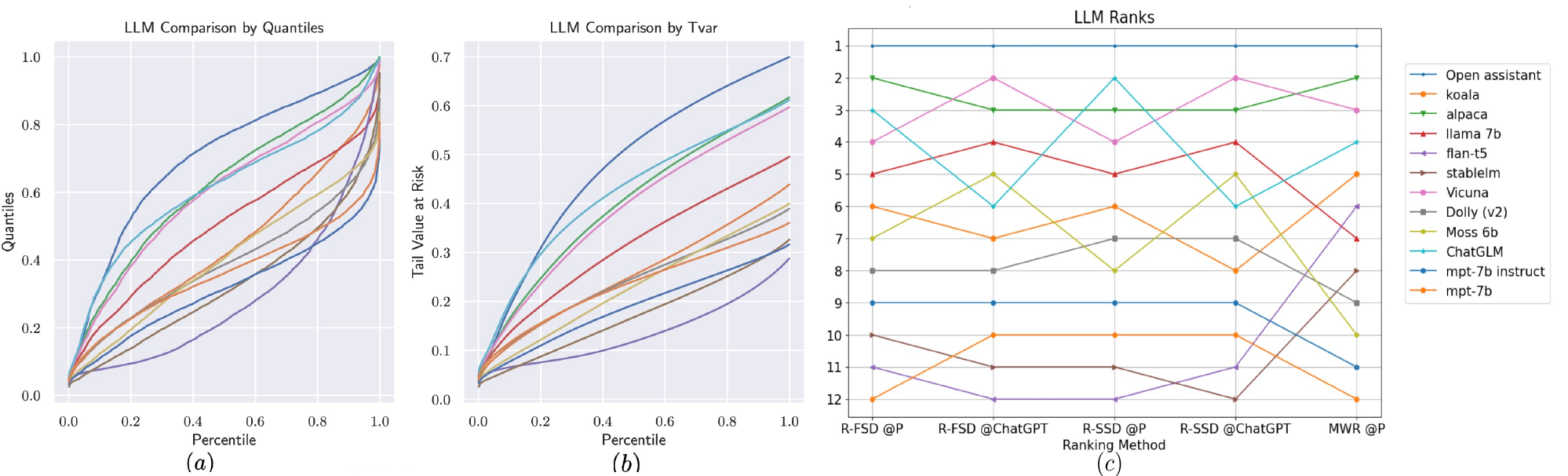} 
           \caption{(a) Quantiles, (b) Tail Value at Risk (TVAR), of Metrics portfolio of an LLM, showing that TVAR (second-order stochastic dominance) more clearly ranks the models than the quantiles alone (first-order stochastic dominance). (c) Ranking of models using Relative First and Second Stochastic Dominance of Portfolios (R-FSD, R-SSD @P) versus ranking of models using Relative First and Second Stochastic Dominance of chatGPT evaluation scores and ranking by Mean Win Rate (MWR) on the metrics portfolio. The portfolio  in this plot uses an independent copula aggregation. Note that (1) the metrics portfolio successfully approximates the chatGPT evaluation, since the @P rankings largely agree with the @chatGPT rankings; (2) the R-SSD rankings outperform MWR baseline.}
        \label{fig:motivation}
    \vskip -0.15 in
\end{figure*}

A comprehensive evaluation  of  LLMs requires  addressing the following critical considerations:
\begin{enumerate}

\item \textbf{\emph{Interpretability.}} Evaluation of foundation models is multi-dimensional in nature  and multiple metrics benchmark the models on different socio-technical dimensions that probe the trustworthiness of their outputs and their adherence to shared values and ethics.  \emph{It is critical to establish an aggregate-level measure to facilitate the interpretation and effective communication of the evaluation results.} 
\item \textbf{\emph{Risk Aware Benchmarking.}}  In natural language (and other) applications, metrics quantify important guardrails such as model's toxicity, safety, or robustness. Therefore, a comprehensive evaluation framework must incorporate a risk aware benchmarking. This entails ranking models based on the assessment of failure modes and tail statistics\footnote{I.e. understanding and quantifying low-probability high-risk events.}, providing a nuanced understanding of potential pitfalls. 

\item \textbf{\emph{Statistical Significance.}} Evaluating machine learning models is intimately connected to statistical significance testing (SST), although this framework is still underutilized: \cite{dror2018hitchhiker} reports almost 50\% of ACL papers miss SST indicators. With the ever increasing parametric complexity of LLMs, obtaining a reliable SST in evaluating foundation models becomes ever more urgent.

\end{enumerate}

We propose in this paper an evaluation framework that offers a principled solution and an efficient implementation that addresses each of these challenges. Our main contributions are:
\begin{enumerate}
    \item \textbf{\emph{Interpretable Metrics-Portfolio (Section \ref{sec:framework}).}} Drawing inspiration from econometrics and mathematical finance, we define a metrics-portfolio for aggregating metrics. This portfolio uses the notion of \emph{copula} to  normalize and aggregate metrics, yielding a single interpretable number assessing each output of a LLM. A higher value of the portfolio is preferable. We illustrate in Figure \ref{fig:motivation} panels (a) and (b) summary statistics  of the metrics portfolio aggregating a total of $8$ automatic metrics computed using $5K$ samples from  the Mix-instruct dataset \citep{jiang2023llm}. In panel (c) we  show that model ranking based on our metrics-portfolio aligns with human evaluation proxies such as chatGPT (Please refer to Appendix \ref{app:chatGPT} for details of how chatGPT score is computed).
    
    \item \textbf{\emph{Risk Aware Benchmarking via  Second Order Stochastic Dominance (Section \ref{sec:SD}).}}
 Stochastic orders define partial orders on random variables and play a vital role in econometrics and mathematical finance for comparing and selecting portfolios. We propose using stochastic order to select LLMs based on their metrics-portfolios. A portfolio dominates in the First Order Stochastic Dominance (FSD) if it has higher quantiles for all percentiles. However, in Figure \ref{fig:motivation} (Panel (a)), the quantiles of the metrics-portfolio of an LLM don't provide a clear ordering. Instead, we propose the use of Second Stochastic Dominance (SSD), where a portfolio dominates if it has higher Tail Values at Risk (TVAR) for all percentiles (also known as Conditional Value at Risk). TVAR, illustrated in Figure \ref{fig:motivation} (Panel (b)), represents normalized integrated quantiles, assessing the risks of low values in the portfolio. Small TVAR corresponds to fat left tails in the distribution of the portfolio, identifying risky LLMs as those with the lowest TVAR. For example, Flan-T5 emerges as the riskiest model in our running example.

\item \textbf{\emph{Statistical Significance via Dominance Tests. (Section \ref{Sec:testing})}} Armed with these notions of stochastic dominance, we define statistics that benchmark the \emph{relative} dominance of a model's portfolio on another (R-FSD and R-SSD in Panel (c) in Figure \ref{fig:motivation}). We subject these statistics to an asymptotic analysis, proving central limit theorems that provide the foundation for hypothesis testing with false discovery rate control. We then perform stochastic dominance hypothesis testings between all pairs of models. Having adjusted the confidence level of these tests, we aggregate these pairwise rankings to a single rank via rank aggregation techniques such as the Borda Algorithm \citep{borda}.
The resulting ranks, depicted in Panel (c) of Figure \ref{fig:motivation}, highlight that the portfolio of automatic metrics (@P) leads to a similar ranking to chatGPT score (@chatGPT) for both first and second stochastic order. To underscore the importance of risk aware benchmarking, we present the ranking of the metrics-portfolio produced by the ubiquitous Min Win Rate (MWR) used in LLM benchmarks \citep{liang2022holistic}(last column in Panel (c)). Flan-T5 ranks close to last with all other orders, but ranks 6 with MWR. This highlights that the ubiquituous MWR used in LLM benchmarks is  risky for ranking LLMs as it does not take into account failure modes of the model, and we caution practitioners of its pitfalls.
\end{enumerate}

\section{Stochastic Dominance}\label{sec:SD}
We first review notions of stochastic dominance and their relation to downside risk measures and risk averse preference modeling. 
 We use the notation of the seminal paper of \cite{ogryczak2002dual}, and assume that the random variables are standardized so that larger  outcomes are preferable. Throughout this Section,  the reader can think of the random variable $X$  as a metric evaluating the performance of model $A$ on a specific test set. Likewise, $Y$ represents the evaluation of model $B$. We defer the definition of metrics portfolio  to Section \ref{sec:framework}. In a multi-metric evaluation, as explained in the introduction, $X$ and $Y$ represent portfolios of evaluations of model $A$  and $B$ respectively.

\subsection{First and Second order Dominance and  Mean-Risk Models }

\paragraph{First Order Stochastic Dominance} The First-order Stochastic Dominance (FSD)  between real-valued random variables uses the right-continuous cumulative distribution (CDF) as a performance function. Specifically, for a real random variable $X$, define the first performance function $F^{(1)}_{X}: \mathbb{R}\to [0,1]$ as the CDF:
$F^{(1)}_{X}(\eta)= \mathbb{P}(X\leq \eta), \forall \eta \in \mathbb{R}$. The FSD  of $X$ on $Y$ is defined as follows:
\begin{equation}
X  \underset{\text{FSD}}{\succcurlyeq} Y \iff F^{(1)}_{X}(\eta) \leq F^{(1)}_{Y}(\eta), \forall \eta \in \mathbb{R},
\end{equation}
this intuitively means that for all outcomes $\eta$, the probability of observing smaller outcomes than $\eta$ is lower for $X$ than $Y$. 
An equivalent definition can be expressed using the quantile $F^{(-1)}_{X}$ (See e.g \cite{ogryczak2002dual}):
\begin{assbox}
\begin{equation}
X  \underset{\text{FSD}}{\succcurlyeq} Y \iff F^{(-1)}_{X}(p) \geq F^{(-1)}_{Y}(p), \forall p  \in (0,1],
\label{eq:FSD}
\end{equation}
\end{assbox}
where $F^{(-1)}_{X}: [0,1]\to \overline{\mathbb{R}}$ is the left-continuous  inverse of $F^{(1)}_{X}$:
$F^{(-1)}_{X}(p)= \inf \{\eta : F^{(1)}_{X}(\eta) \geq p \} \text{ for } p \in (0,1].$
We focus on this definition as it is more computationally and notationally friendly since the quantile function is always supported on $[0,1]$.

\paragraph{Second Order Stochastic Dominance} The Second-order Stochastic Dominance (SSD) is defined via the second performance function  $F^{(2)}_{X}: \mathbb{R}\to [0,1]$ that measures the area under the CDF:
$
F^{(2)}_{X}(\eta)=\int_{-\infty}^{\eta} F^{(1)}_{X}(x)dx , \text{ for } x \in \mathbb{R},
$
yielding:
\begin{equation}
X\underset{\text{SSD}}{\succcurlyeq} Y \iff   F^{(2)}_{X}(\eta) \leq F^{(2)}_{Y}(\eta), \forall \eta \in \mathbb{R}.
 \end{equation}
 Note that FSD implies SSD, hence SSD is a finer notion of dominance. While FSD implies that $X$ is preferred to $Y$ by any utility-maximizing agent preferring larger outcomes\footnote{I.e. having an increasing utility function.}, \cite{ogryczak2002dual} showed that SSD implies that $X$ is preferred to $Y$ by any \emph{risk-averse} agent preferring larger outcomes.\footnote{I.e. having an increasing and \emph{concave} utility function.} 
 Similarly to FSD, SSD can be measured with quantile functions via introducing the second quantile function also known as \emph{integrated quantiles} $F^{(-2)}_{X}:(0,1] \to \overline{\mathbb{R}}$
 \begin{equation}
F^{(-2)}_{X}(p)=\int_{0}^{p} F^{(-1)}_{X}(t)dt , \text{ for } t \in (0,1].
\end{equation}
Similarly to the FSD case, an equivalent more computationally friendly definition can be expressed in terms of the second quantile function (a proof of this equivalence can be found in Theorem 3.2 in \cite{ogryczak2002dual}):
\begin{assbox}
\begin{equation}
X\underset{\text{SSD}}{\succcurlyeq} Y \iff   F^{(-2)}_{X}(p) \geq F^{(-2)}_{Y}(p), \forall p \in (0,1].
\label{eq:SSD}
 \end{equation}
 \end{assbox}
This equivalence is not straightforward and is due to Fenchel duality between $F^{(2)}$ and $F^{(-2)}$. Using $p=1$ we see that SSD implies $\mu_{X}\geq \mu_{Y}$, where $\mu_{X}$ and $\mu_{Y}$ are means of $X$ and $Y$. 

\begin{table*}[t!]
\centering
\resizebox{\textwidth}{!}{\begin{tabular}{l|l|l|c}
Name & Risk Measure  & $\alpha-$ consistency with SSD \\
\hline
 Standard deviation & $\sigma_{X}=\sqrt{\mathbb{E}(X-\mu_{X})^2}$ &  not consistent \\
Absolute semi deviation & $\delta_{X}=\mathbb{E}(\mu_{X}-X)_{+}$ & $1-$ consistent\\
Negative Tail Value at Risk & $-\mathrm{TVAR}_{X}(p)= - \frac{F^{(-2)}(p)}{p}$& $1-$ consistent for all $p\in(0,1]$\\
Mean absolute deviation from a quantile &$h_{X}(p)=\mu_{x}-\frac{F^{(-2)}_{X}(p)}{p}$ & $1-$ consistent for all $p \in (0,1]$\\
Gini Tail & $\Gamma_{X}=2\int_{0}^1(\mu_Xp-F^{(-2)}_{X}(p))dp$& $1-$ consistent \\
 \hline
\end{tabular}}
\caption{Risk models and their $\alpha-$consistency with SSD. }
\label{tab:RiskCMain}
\vskip -0.15in
\end{table*}
\textbf{Mean -- Risk Models (MRM)} As noted earlier SSD is linked to risk aware benchmarking via the second performance function $F^{(2)}(.)$ measuring expected shortfall, and the negative second quantile function $-F^{(-2)}(p)$ that is an assessment of expected losses given outcomes lower than the $p$-quantile. 

\begin{definition}[Mean -- Risk Models] A mean -- risk model of a random variable  $X$ consists of the pair $(\mu_{X},r_{X})$, where $\mu_{X}$ is the mean of $X$, and $r_{X}$ is a functional that measures the risk of the random outcome $X$. 
\end{definition}

The consistency of a mean -- risk model with SSD is defined as follows:

\begin{definition}[SSD consistency of Mean -- Risk Models] A mean -- risk model $(\mu_{X},r_{X})$ is $\alpha-$consistent with SSD, if for $\alpha >0$ the following is true:
\begin{equation}
X\underset{\text{SSD}}{\succcurlyeq} Y  \implies \mu_X -\alpha r_x \geq \mu_{Y}-\alpha r_{Y}.
\end{equation}
\end{definition}
\vskip -0.15 in


The ubiquitous mean -- risk model in machine learning is $(\mu_{X},\sigma_{X})$, where $\sigma_{X}$ is the standard deviation. Unfortunately this model is not consistent with the SSD and has several limitations as it implies Gaussianity of the outcomes or a quadratic utility function. 
We give in Table \ref{tab:RiskCMain}  risk measurements and their $\alpha-$consistency (proofs in \cite{ogryczak2002dual}). Note that  in contrast FSD is only consistent with the Mean-VaR risk model (Mean-Value at Risk) for all $p\in [0,1]$. VaR does not provide a refined tail assessment.  

\subsection{Relaxations of Stochastic Dominance}\label{sec:RelaxDominance}
\vskip -0.10in
Recalling the definitions of FSD and SSD in Equations \eqref{eq:FSD} and \eqref{eq:SSD}, in the finite-sample regime it is hard to test for these relations as one needs to show the infinite-sample quantile or second quantile properties hold uniformly over all $p \in (0,1]$. This difficulty motivated the relaxation of stochastic dominance to an almost stochastic dominance pioneered by \cite{leshno2002preferred}. These relaxations were revisited for the first order by  \cite{alvarez2014contamination} who later proposed an optimal transportation approach to assess almost first stochastic order \citep{del2018optimal}. 

\textbf{Almost FSD ($\varepsilon$-FSD)}  Following \cite{leshno2002preferred}, \cite{del2018optimal} relaxed FSD (Equation \eqref{eq:FSD}) via the violation ratio of FSD. $X  \underset{\varepsilon- \text{FSD}}{\succcurlyeq} Y $ if and only if: 

\begin{assbox}
\begin{equation}
\varepsilon_{\mathsf{W}_2}(F_{X},F_{Y})= \frac{\int_0^1(F_{Y}^{(-1)}(t)-F_{X}^{(-1)}(t))^2_+ dt}{\mathsf{W}^2_2(F_{X},F_{Y})} \leq \varepsilon,
\label{eq:epsFSD}
\end{equation}
\end{assbox}
 where $\mathsf{W}_2$ is the Wasserstein -2 distance between $F_X$ and $F_{Y}$.This ratio corresponds to a measure of the ``area" of violation of the FSD dominance of $X$ on $Y$. Note that $0\leq \varepsilon_{\mathsf{W}_2}(F_{X},F_{Y})\leq 1$, with value $0$ if  $X  \underset{\text{FSD}}{\succ} Y $ and $1$ if $Y  \underset{\text{FSD}}{\succ} X$. For $\varepsilon \in (0,\frac{1}{2}]$, 
Figure \ref{fig:epsilonFSD} in Appendix \ref{app:fig} illustrates $\varepsilon$-FSD, dashed areas represent the violation set. 

\textbf{Almost SSD ($\varepsilon$-SSD)} 
We define $\varepsilon$-SSD, for $\varepsilon \in(0,\frac{1}{2})$, by relaxing Equation \eqref{eq:SSD} as follows: $X  \underset{\varepsilon- \text{SSD}}{\succcurlyeq} Y  $ if and only if
\begin{assbox}
\begin{equation}
\varepsilon_{IQ}(F_{X},F_{Y}) = \frac{\int_0^1(F_{Y}^{(-2)}(t)-F_{X}^{(-2)}(t))^2_+ dt}{d_{IQ}^2(F_{X},F_{Y})} \leq \varepsilon,
\label{eq:eps-SSD}
\end{equation}
\end{assbox}
where $d_{IQ}$ is the $L_2$ distance between the Integrated Quantiles $(F^{(-2)})$. This ratio corresponds to a measure of the ``area" of violation of the SSD dominance of $X$ on $Y$.
Figure \ref{fig:epsSSD} in Appendix \ref{app:fig}  illustrates the second order, dashed areas represent the violation set of SSD of $X$ on $Y$.  Appendix \ref{app:SD} gives a more detailed account on almost stochastic dominance.

\subsection{Relative Stochastic Dominance}\label{sec:RelDom}
In the remainder of the paper, we refer to the FSD violation ratio as $\varepsilon_{\mathsf{W}_2}(F_{X},F_{Y}) \equiv \varepsilon^{(1)}(F_{X},F_{Y})$ and to the SSD violation ratio as $\varepsilon_{IQ}(F_{X},F_{Y}) \equiv \varepsilon^{(2)}(F_{X},F_{Y})$.
One of the shortcomings of almost stochastic dominance is the need to fix a threshold $\varepsilon$ on the violation ratio. When comparing two random variables, setting a threshold is a viable option. Nevertheless, when one needs to rank multiple variables $X_1,\dots, X_k$ (considering all pairwise comparisons), setting a single threshold that would lead to a consistent relative stochastic dominance among the $k$ variables becomes challenging. To alleviate this issue, we draw inspiration from relative similarity and dependence tests \citep{bounliphone2016test,bounliphone2016fast} that circumvent the need for a threshold via relative pairwise testings.

For $\ell \in \{1,2\}$ (i.e for FSD or SSD) we consider all pairs of violations ratios:
\[\varepsilon^{(\ell)}_{ij}=\varepsilon^{(\ell)}(F_{X_i},F_{X_j}) \text{ for } i, j \in \{1\dots k\},i\neq j,\]
noting that $\varepsilon^{(\ell)}_{ij}+\varepsilon^{(\ell)}_{ji}=1. $
Let $F=(F_{X_1},\dots F_{X_k})$.
We define the one-versus-all violation ratio of the dominance of $X_i$ on all other variables $X_j, j\neq i$ :
\[\varepsilon^{(\ell)}_i(F) = \frac{1}{k-1}\sum_{j\neq i }\varepsilon^{(\ell)}_{ij}. \]
We then define relative stochastic dominance for both orders, R-FSD an R-SSD respectively:
    \begin{align*}
   & X_{i_1} \underset{R-\text{FSD}}{\succcurlyeq} X_{i_2} \dots  \underset{R-\text{FSD}}{\succcurlyeq}  X_{i_{k}}\\
   & \iff  \varepsilon^{(1)}_{i_1}(F ) \leq \dots\leq \varepsilon^{(1)}_{i_k}(F )   
   & \text{ and, }\\
   &   X_{i_1} \underset{R-\text{SSD}}{\succcurlyeq} X_{i_2} \dots  \underset{R-\text{SSD}}{\succcurlyeq}  X_{i_{k}}\\
   &\iff  \varepsilon^{(2)}_{i_1}(F ) \leq \dots\leq \varepsilon^{(2)}_{i_k}(F )   
\end{align*}

In this definition of relative stochastic dominance, the most dominating model is the one with the lowest one-versus-all violation ratio and to test for relative dominance of $X_i$ on $X_j$ we can look at the following statistics:
\begin{equation}
   \Delta \varepsilon^{(\ell)}_{ij}(F)=\varepsilon^{(\ell)}_i(F) -  \varepsilon^{(\ell)}_j(F),
\end{equation}
and we have the following threshold-free test for relative order:\footnote{For comparing $k=2$ random variables, these $r$-FSD and $r$-SSD tests reduce to $0.5$-FSD and $0.5$-SSD absolute tests, respectively.}
\begin{assbox}
\begin{equation}
   X_{i} \underset{R-\text{FSD}}{\succcurlyeq} X_{j}  \iff  \Delta \varepsilon^{(1)}_{ij}(F) \leq 0 
   \label{eq:R-FSD}
\end{equation}
\begin{equation}
   X_{i} \underset{R-\text{SSD}}{\succcurlyeq} X_{j}  \iff  \Delta \varepsilon^{(2)}_{ij}(F) \leq 0 
   \label{eq:R-SSD}
\end{equation}
\end{assbox}

\section{Testing For Almost and Relative Stochastic Dominance}\label{Sec:testing}
Given empirical samples from $F_{X}$ and $F_{Y}$ we perform statistical testing of the almost and relative stochastic dominance of $X$ on $Y$ given empirical estimates of the statistics given in Sections \ref{sec:RelaxDominance} and \ref{sec:RelDom}. A key ingredient for quantifying the statistical significance of such tests is a central limit theorem that guarantees that the centered empirical statistics is asymptotically Gaussian at the limit of infinite sample size. Given $n$ samples  from $F_{X}$ ($m$ from $F_{Y}$ respectively), we denote $F^{n}_{X}$ and  $F^{m}_{Y}$ the corresponding empirical distributions.  For $\varepsilon_0-$ FSD, \cite{del2018optimal} studied the following hypothesis testing $H_0: X  \underset{\varepsilon_0- \text{SSD}}{\cancel{\succcurlyeq}} Y $ versus the alternative $H_a: X  \underset{\varepsilon_0- \text{SSD}}{{\succcurlyeq}} Y $. Using \eqref{eq:FSD}, this amounts to the following null hypothesis : 
$H_0: \varepsilon_{\mathsf{W}_2}(F^n_{X},F^m_{Y}) > \varepsilon_0.$ \cite{del2018optimal} showed the asymptotic normality of the empirical statistics:
\cite{del2018optimal,ulmer2022deep} propose to reject $H_0$ with a confidence level $1-\alpha$ if:
\begin{equation}
\varepsilon_{\mathsf{W}_2}(F^n_{X},F^m_{Y})  \leq \varepsilon_0 + \sqrt{\frac{m+n}{mn}}  \sigma^2(F_X, F_Y)\Phi^{-1}(\alpha),
\label{eq:CLT-E-FSD}
\end{equation}
where $\Phi^{-1}$ is the quantile function of a standard normal.

For the tests we propose below, we assume the following structure on the underlying CDFs to derive the corresponding central limit theorems (CLTs).

    \begin{assum}[Regularity]
        Let the CDF $F$ be supported on the interval $[-M,M]$ for some constant $M$, and have pdf $f$ such that $\frac{f'(p)}{f^3(p)}$ is bounded for almost every $p$ for which $f(p) > 0$ (i.e. all $p$ in the support of $f$). 
          \label{assumption}
    \end{assum}

\textbf{$\varepsilon$-SSD Testing} Similar to $\varepsilon$-FSD, using the definition in \eqref{eq:SSD} we propose to test using the  following null hypothesis for testing for $\varepsilon_0$-SSD:
\[H_0: \varepsilon_{IQ}(F^n_{X},F^{m}_{Y}) > \varepsilon_0 \]
Supposing Assumption \ref{assumption} holds for $F_X$, $F_Y$ and assuming $\frac{n}{n + m} \rightarrow \lambda$ for some $\lambda$, we state  a Central Limit Theorem for the second order statistics  (Theorem \ref{thm:CLT1Main}, proved in Appendix \ref{app:CLT}).

\begin{theorem}[Central Limit Theorem for $\varepsilon$-SSD]\label{thm:CLT1Main}
    Assume that $F_{X}$, $F_{Y}$ are supported on intervals\footnote{The interval for $F_X$ and for $F_Y$ need not coincide.} in $[-M,M]$, and have pdfs $f_x,f_y$ such that $\frac{f'_x(p)}{f^3_x(p)}$, $\frac{f_y'(p)}{f_y^3(p)}$ are bounded almost everywhere on the support of $f_x$ and $f_y$ respectively. Assume we have $n$ samples from $F_X$ and $m$ samples from $F_{Y}$, with $n, m \rightarrow \infty$ such that $\frac{n}{n+m} \rightarrow \lambda$ for some $\lambda$. Then 
$\sqrt{\frac{mn}{m+n}}\left(\varepsilon_{IQ}(F^n_{X},F^m_{Y}) - \varepsilon_{IQ}(F_{X},F_{Y})\right) 
\rightarrow \mathcal{N}(0,\sigma_\lambda^2(F_{X},F_{Y}))$
where $\sigma_\lambda^2(F_{X},F_{Y}) = \frac{1}{d^8_{IQ}(F_{X},F_{Y})}\left[(1-\lambda)\mathrm{Var}(v_X(U)) + \lambda \mathrm{Var}(v_Y(U)) \right],
$
for $U\sim \mathrm{Unif}[0,1]$, $v_Y(t) = 2  \left(\frac{1}{f_y(F_{Y}^{-1}(t))}\right)\left( \int_t^1 (F^{(-2)}_{X}(p)  - F^{(-2)}_{Y}(p) )_+ dp\right),$ and $v_X(t) = 2  \left(\frac{1}{f_x(F_{X}^{-1}(t))}\right)\left( \int_t^1 (F^{(-2)}_{X}(p)  - F^{(-2)}_{Y}(p) )_- dp\right).$
\end{theorem}

Similarly to \eqref{eq:CLT-E-FSD}, Theorem \ref{thm:CLT1Main} suggests to reject $H_0$ with a confidence $1-\alpha$ if :
\begin{assbox}
\begin{equation}
\varepsilon_{IQ}(F^n_{X},F^m_{Y})  \leq \varepsilon_0 + \sqrt{\frac{m+n}{mn}}  \sigma_\lambda^2(F_X, F_Y)\Phi^{-1}(\alpha),
\label{eq:CLT-E-SSD}
\end{equation}
\end{assbox}
where (for the same reasons as the FSD case) $\sigma^2_\lambda$ is given by the central limit theorem.

\textbf{Relative Stochastic Dominance Testing} We turn now to relative stochastic dominance that we introduced in \eqref{eq:R-FSD}  and \eqref{eq:R-SSD}   for first and second orders. Given $n$ samples from $k$ random variables $(X_1\dots X_k)$, let  $F=(F_1, \dots,F_k)$ be the  marginals of $X_i$ and $F_n=(F_{1n}, \dots,F_{kn})$ denote the empirical marginals. To test  for R-FSD (resp R-SSD) of $X_{i_1}$ on $X_{i_2}$ we propose to test the following null hypothesis:
\[ 
H_0: \Delta \varepsilon^{(\ell)}_{ij}(F_n) > 0 ,\ell=1 \text{ or } 2
\]

Assuming that each $F_i$ satisfies Assumption \ref{assumption}, we state in Appendix \ref{app:theory} a central limit theorem for the relative second order statistics (Theorem \ref{thm:relative} proved in in Appendix \ref{app:relative}). A similar result holds for the relative first order statistics that we omit for brevity.  
Theorem \ref{thm:relative} suggests to reject $H_0$ with a confidence $1-\alpha$ if:
\begin{assbox}
\begin{equation}
 \Delta \varepsilon^{(2)}_{i_1,i_2}(F_{n}) \leq \sqrt{\frac{1}{n}}  \sigma_{relative}^2(F_X, F_Y)\Phi^{-1}(\alpha)
 \end{equation}
 \end{assbox}
where $\sigma_{relative}^2(F_X, F_Y)$ is given by the central limit theorem (similar test exists for R-FSD).

\noindent \textbf{Bootstrapping Heuristic} While the CLT  above provides an asymptotic value for the variance, in practice (as in the ASO framework of \citep{ulmer2022deep}) we estimate the variance with a bootstrapping heuristic \citep{Efron1993}. This estimate is nonasymptotic and hence should often be more accurate than the asymptotic value. Proving the consistency of the bootstrap for functions of quantiles is generally nontrivial \citep{shao2012jackknife}, but recall that the stochastic ordering can be defined in terms of either quantiles or CDFs. In Appendix \ref{app:boot} we provide a bootstrap consistency proof for the absolute statistics based on the CDF, leaving the quantile based proof for future work. 

\noindent \textbf{Multi-Testing Algorithm} Algorithm \ref{alg:SOMT} given in Appendix \ref{app:alg} summarizes the multi-testing setup for both relative and almost (absolute) FSD and SSD. The main idea behind Algorithm \ref{alg:SOMT} is to turn multi-testing to  pairwise testings i.e testing for stochastic dominance between all pairs of models using relative (or absolute) FSD or SSD. In order to ensure that this multi-testing has a confidence level $1-\alpha$, we   correct the individual test's  confidence level   by  dividing $\alpha$ by the number of all pairs \citep{bonferroni1936teoria}.  Then in order to combine the pairwise rankings  to a single rank, we use a simple Borda count  \citep{borda} rank aggregation algorithm.

\section{Distributional Risk Aware Benchmarking of
Foundation Models } \label{sec:framework}

\textbf{Setup} In this section we consider the multi-metric evaluation setup of a foundation model $A: \mathcal{X}\to \mathcal{O}$, using  $N$ metrics $m_i: \mathcal{O} \to \mathbb{R},i=1\dots N$, where $m_i$ are real valued functions evaluated on a test set $D$.
Without  loss of generality, assume that each of the metrics are standardized such that higher values of $m_i$ correspond to more desirable model performance. We model observed values for each metric $m_i$ as a continuous random variable $M_i$ with unknown CDF $F_{M_i}$. For a model $A: \mathcal{X}\to \mathcal{O}$ and a data sample $X\sim D$, we describe the evaluation of model $A$ with $m_i$ with the following random variable $M_i$: $M_{i}| A,X := m_i(A(X)),~ X \sim D, i=1\dots N,$
where the randomness arises from the data sampling procedure $X\sim D$, and (if applicable) the stochasticity of the model $A$, for example if the model uses sampling.
\textbf{Metrics Portfolio Aggregation and Selection using Stochastic Dominance }
Let $\lambda =(\lambda_1,\dots, \lambda_N)$ be  a probability vector  that represents the importance of the $m_i$ metrics to the model's end user. Inspired by the portfolio optimization literature, we model the user return from a model as a \emph{portfolio of  metrics $m_i$ evaluated on a test set $D$}. Following \citep{copula-aggregation,belgodere2023auditing}, we define this portfolio  as an  Independent copula, which forms a weighted geometric mean of the CDFs:
\begin{equation}
 R_{A}(X) = \exp\left(\sum_{i=1}^N \lambda_i \log F_{M_i}\left(m_i(A(X))\right)\right) 
 \label{eq:portfolio}
 \end{equation}
 Note that \eqref{eq:portfolio} normalizes the metrics using the CDF of the metric $M_i$, eliminating the issue of differing dynamic ranges. This CDF should be formed by pooling together the evaluations on all samples and from all models being compared, to ensure that the various $R_A$ are comparable. The CDF normalization is monotonic and hence it preserves the order of each metrics and allow us to aggregate in the probability space the metrics using a simple weighted geometric mean. Computing $R_{A}(X)$ for all test samples $X$, we can therefore characterize the distribution of the  metric portfolio of the model $A$. To compare two models it is enough to compare their corresponding portfolios, specifically, Model $A$ is preferred  to Model B using $\varepsilon$- or R-SSD:
\begin{equation}
R_{A}(X) \underset{\varepsilon- \text{ or } R- \text{SSD}}{\succcurlyeq}  R_{B}(X).
\end{equation}
Similar tests can be performed for FSD. 

Note that the portfolio aggregation in \eqref{eq:portfolio} does not take into account the  dependencies and  correlations between the metrics. To alleviate this, we explore using also the empirical copula \cite{empricalcopula} as a means of aggregation of the metrics as follows
\begin{equation}
 R^c_{A}(X) =\hat{C}\left( F_{M_1}\left(m_1(A(X))\right), \dots F_{M_N}\left(m_N(A(X))\right)\right ), 
 \label{eq:portfolioCopula}
 \end{equation}
where $\hat{C}$ is the empirical copula . Given $N$ samples $X_{\ell}$, $\ell=1\dots n$, the empirical copula is given by
$\hat{C}(u_1,\dots u_n)=\frac{1}{n} \sum_{j=1}^n \Pi_{i=1}^N \mathbbm{1}_{{F}_{M_i}\left(m_i(A(X_j))\right)< u_i}$. The empirical copula can be understood as an average mean win rate (with an ``and" operation on all metrics), that is computed on the CDF transformed scores of each evaluated sample.  The main advantage of the independent copula (IC) in \eqref{eq:portfolio} versus the empirical copula (EC) in \eqref{eq:portfolioCopula} is its computational efficiency ($O(nN)$ for IC versus $O(n^2N)$ for EC). 

\textbf{Multiple Models Comparison}
Given $k$ models $A_{\ell},\ell=1\dots k $  and their evaluations  $m_{i}(A_{\ell}(X)),\:X\sim D,\: i=1\dots N$, we pool all model evaluations for a metric to estimate the CDF of each metric $F_{M_i}$ and construct a portfolio for each model $R_{A_{\ell}}(X)$. We use our Relative Stochastic Dominance testing introduced in Section \ref{Sec:testing} and in Algorithm \ref{alg:SOMT} to rank the models by their metrics portfolio in relative SSD or FSD with a confidence level $1-\alpha$.

\textbf{Per Metric Stochastic Dominance and Rank Aggregation} 
We also explore another approach for multi-testing, by considering the stochastic dominance of the models on per-metric basis. This amounts to computing $N$ relative stochastic orders for each $\mathcal{M}_i=(m_i(A_1(X)), \dots, m_i(A_{\ell}(X)))$, $i=1\dots N$. This amounts to producing via Algorithm \ref{alg:SOMT}  a relative ranking $\pi_i$ of the models based on $\mathcal{M}_i$. A single rank $\pi$ is then obtained via rank aggregation with uniform weighting on the per-metric rankings $\pi_i,i=1\dots N$. We use for rank aggregation the R package of \citep{Pihur2009RankAggregAR}. For more details on rank aggregation, the reader is referred to Appendix \ref{app:rankagg}.

\section{Experiments}\label{sec:exp}
\begin{figure*}[ht!]
    \centering
    \begin{subfigure}[t]{0.5\textwidth}
        \centering
        \includegraphics[height=2in]{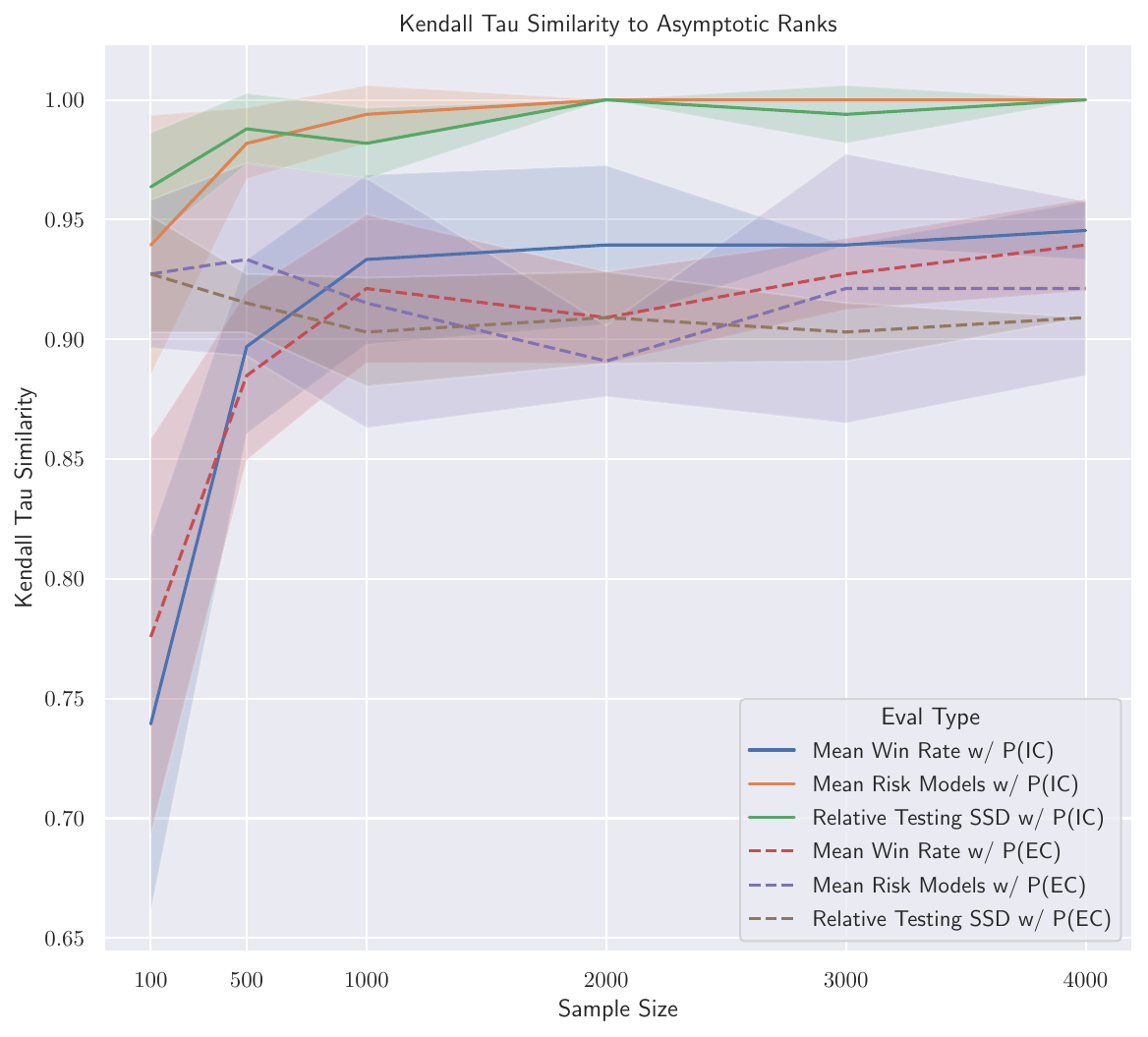}
        \caption{Asymptotic Rank Stability}
    \end{subfigure}%
    ~ 
    \begin{subfigure}[t]{0.5\textwidth}
        \centering
        \includegraphics[height=2in]{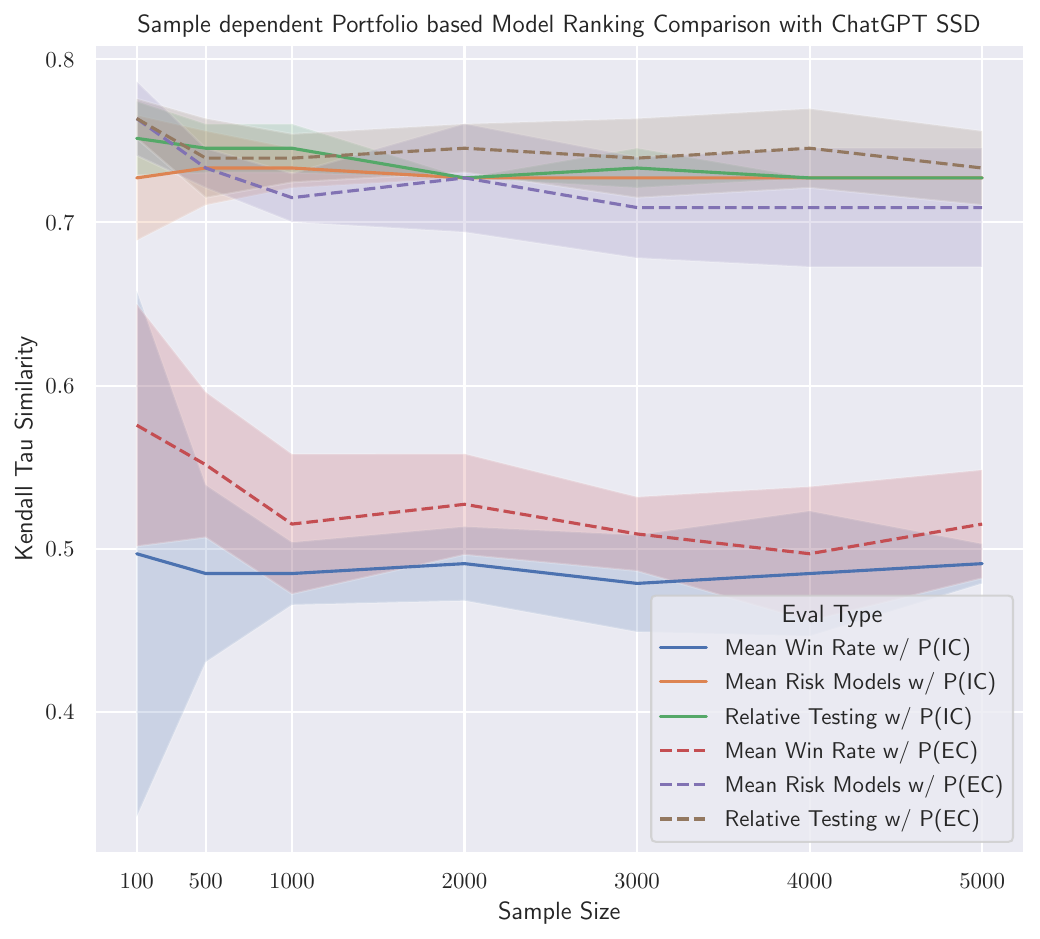}
        \caption{Rank Similarity to R-SSD @chatGPT Rank.}
    \end{subfigure}
    \caption{(a) On the Mix-instruct dataset, we compute the ranking resulting from each ranking method using varying sample sizes from 100 to 5K. We repeat each experiment 5 times. We report for each method,  the Kendall-Tau similarity between resulting ranks at each sample to the corresponding asymptotic rank at 5K samples.  We see that Relative SSD on independent copula portfolio P(IC) is  more stable in sample size than rank aggregation of all Mean Risk Models and more stable than MWR on the portfolio. The empirical dependent copula portfolio P(EC) does not have favorable asymptotics w.r.t to P(IC) since it suffers from the curse of dimension. (b) We use the same setup as in (a) but instead of Kendall-Tau similarity to the asymptotic rank of each method, we plot the similarity to R-SSD @ChatGPT rank at 5K samples. We see that MWR is inconsistent with chatGPT rank  while both R-SSD @P(IC) and (EC) and RA(MRM @P(IC)) have a Kendall-Tau similarity between 0.7 and 0.75. Interestingly, the dependent copula (EC) captures better chatGPT rank than independent copula (IC), hinting at the favorable role of the metric dependencies. }
    \label{fig:asymp}
\end{figure*}
\subsection{Validation of Statistical Significance}
We examine the statistical properties of our tests as a function of sample size.
We purposely design synthetic score distributions to represent challenging problems comprising large overlap between the distributions and considerable violation ratio, but where one would still like to have an ordering among the variables.
For this we consider the two Gaussian variables  $X\sim \mathcal{N}(0,1)$ and  and $Y \sim \mathcal{N}(0.5, 2)$. Figure \ref{fig:SyntTest}  in Appendix \ref{app:stsGaussian} shows that our tests have desirable statistical properties. We perform synthetic experiment on fat tailed distribution such as log normal (Fig. \ref{fig:SyntTest2} App. \ref{app:stsGaussian}).

\subsection{LLM evaluation with Stochastic Dominance }


We showcase LLM evaluation with stochastic dominance to benchmark two risks: drifting from instructions and outputting toxic content. The following datasets correspond to each risk we benchmark.

\textbf{Mix-Instruct Evaluation Data}  We use the data from \citep{jiang2023llm}, that consists of an instruction, an input sentence and an expected output from the user, as well as the output of a set of different LLMs. The dataset consists of a training set of 100K samples and a test set of 5K samples. \citep{jiang2023llm} used automatic metrics such as BARTscore and BLEU score comparing the LLM generation to the expected output in order to evaluate if each LLM followed the instruction. \citep{jiang2023llm} used also chatGPT to evaluate the generations (See Appendix \ref{app:chatGPT} for ChatGPT evaluation).  The  number of automatic metrics $N$ is 8, the total number of evaluated models $k$ is 12. Metrics are unified so that larger values are preferred.  \\

\noindent\textbf{Toxicity Evaluation} We use the real toxicity prompts dataset of \citet{Gehman2020RealToxicityPromptsEN}, and generate prompts completions  from the Llama 2 7b , Llama 2 13b,  Llama 2 70b , MosaicML MPT 30b and Tiiuae Falcon 40b models available in Opensource ($k=5$ models). We select two sets of prompts: toxic prompts (toxicity $> 0.8$, that gives $\sim  \!\! 10$K prompts ) and non-toxic prompts (toxicity $<0.2$, from which we randomly sample 10K). We sample from each model, 10 completions per prompt using nucleus sampling (top-$p$ sampling with $p=0.9$ and a temperature of 1). This procedure yields a dataset of $\sim \! \!200$K sentence completions per model. We evaluate the toxicity of these generations using the Perspective API, on the following toxicity metrics ($N=6$ metrics): Toxicity, Severe toxicity, Identity Attack, Insult, Profanity and Threat. Following \citet{liang2022holistic}, we evaluate the toxicity of generated completions only and refer to this as \textbf{\emph{Gen Only}} evaluation. In order to also give the context of the completion, we prepend the model generation with the prompt and evaluate the full sentence using Perspective API. We refer to this as \textbf{\emph{Prompt+Gen}}. The polarity of all toxicity metrics is unified so that high values refer to non toxic content (we use $-\mathrm{log}$ probabilities of Perspective API outputs).

\textbf{Evaluation Protocol and Baselines} We evaluate each of the use cases (instruction following  and toxicity) using the following absolute stochastic dominance tests: (1) $\varepsilon$-FSD (corresponds to the ASO evaluation of \cite{ulmer2022deep}) for $\varepsilon=0.08,0,25,0.4$.
(2) our proposed $\varepsilon$-SSD using the same values for $\varepsilon$, (3) our relative stochastic dominance R-FSD and R-SSD tests, (4) the Mean -- Risk models described in Table \ref{tab:RiskC}, and (5) the ranking produced by the Mean Win Rate (\textbf{MWR}) used by LLM leaderboards
such as HELM \citep{liang2022holistic}. As noted in Section \ref{sec:framework}, we either perform these tests on a \emph{metrics portfolio}  -- we refer to this as \textbf{test @P(IC)} when using the independent copula given in Equation \eqref{eq:portfolio} and  \textbf{test @P(EC)} when using the empirical copula given in Equation \eqref{eq:portfolioCopula} ; or  on a per metric basis leading to $N$ rankings of the models that we reduce to a single ranking via Rank Aggregation (RA) \citep{Pihur2009RankAggregAR} -- we refer to this as \textbf{RA(test @ M)}. In this naming convention, $\mathbf{test}$ takes values in $\{$MWR, $\varepsilon$-FSD, $\varepsilon$-SSD, R-FSD, R-SSD, Mean -- Risk Model ($\mu_{X}-r_{X}) \}$ where $r_X$ is a chosen risk from Table \ref{tab:RiskC}. We perform all our statistical tests with a significance  level $\alpha=0.05$, and use $1000$ bootstrap iterations.

\textbf{Efficient Implementation} We compare the computational complexity of our implementation for computing all stochastic orders to that of the \texttt{Deep-Significance} package \citep{deepsig} which implements $\varepsilon$-FSD in the ASO framework \citep{ulmer2022deep}, on the task of comparing models on the Mix-Instruct dataset (sample size 5K, $k=12$ models). Using the \texttt{Deep-Significance} implementation of \textsc{multi-ASO} in  \citep{ulmer2022deep} for $\varepsilon=0.25$ with just 3 bootstrap iterations\footnote{Limited to 3 for computational reasons.}, the test completes in 15min50s (averaged over 7 runs). Our code for relative and absolute testing performs all tests at once and relies on caching vectorization and multi-threading of the operations. Our code completes all tests in an average of just 17.7 s with 1000 bootstraps. Experiments were run on a CPU machine with 128 AMD cores, of which 2 were used.

\textbf{Mix-Instruct Results and Analysis} 
In Figure \ref{fig:asymp} we depict the asymptotics of the ranks resulting from our tests as function of the sample size. In  Figure \ref{fig:asymp} (a), we see that R-SSD  with  the portfolio aggregation with Independent Copula P(IC) has favorable asymptotics compared to R-SSD with dependent Empirical Copula P(EC). Indeed the empirical copula estimation suffers from the curse of dimension. On the other hand, we see in Figure \ref{fig:asymp} (b) that R-SSD with P(EC) captures better than P(IC) the ranks resulting from R-SSD with ChatGPT score. In other words, the dependent copula agrees more with the human evaluation proxy that is chatGPT. Note that the EC is expensive to compute and requires on average 1.5 hours  on 5K samples, whereas IC requires only 0.87 seconds. 

When compared with Mean Win Rate (MWR)  used in LLM leaderboards such as HELM \citep{liang2022holistic}, we see that it does not have good asymptotics nor agree with ChatGPT rankings, regardless of the aggregation technique used. This is due to the fact that MWR only counts wins and does not take into account how fat is the left tail of the distribution of the metric being benchmarked, possibly leading to overevaluation of risky models.

 Remarkably, the R-SSD ordering agrees with the rank aggregation of all (consistent) mean -- risk models, confirming the theoretical  link between second order dominance and risk averse decision making. The dependent copula EC with R-SSD leads to a better agreement with chatGPT R-SSD ranking than MRM models. Finally Tables  \ref{tab:tabMixInstructMain}  and Table  \ref{tab:mixinstapp} in Appendix \ref{app:AddExp} give additional results on R-FSD and the rank aggregation of all metrics, and how it compares to $\varepsilon-$ FSD and SSD.  

\begin{table*}[ht!]
\centering
\resizebox{\textwidth}{!}{\begin{tabular}{lccccc}

Scenario &  Llama 2 7b & Llama 2 13b & Llama 2 70b & MosaicML MPT 30b & Tiiuae Falcon 40b \\
\Xhline{5\arrayrulewidth}
&  &  &  &  &  \\
\textbf{All Combined (Toxic + Non-Toxic Prompts) } &  &  &  &  &  \\
&  &  &  &  &  \\
RA(R-FSD @M) (Gen Only) & \ranktwo{2} & \rankthree{3} & 5 & \rankone{1} & 4 \\
R-FSD @P(IC) (Gen Only)& \ranktwo{2} & \rankthree{3} & 5 & \rankone{1} & 4 \\
RA(R-SSD @M) (Gen Only) & \ranktwo{2} & \rankthree{3} & 5 & \rankone{1} & 4 \\
R-SSD @P(IC) (Gen Only) & \ranktwo{2} & \rankthree{3} & 5 & \rankone{1} & 4 \\

\hline
 &  &  &  &  &  \\
 
RA(R-FSD @M) (Prompt + Gen) & \rankthree{3} & 4 & 5 & \rankone{1} & \ranktwo{2} \\

RA(R-FSD @M) (Prompt + Gen)& \rankthree{3} & 4 & 5 & \rankone{1} & \ranktwo{2} \\

 R-SSD @P(IC) (Prompt + Gen)& \rankthree{3} & 4 & 5 & \rankone{1} & \ranktwo{2} \\
 R-SSD @P(IC) (Prompt + Gen) & \rankthree{3} & 4 & 5 & \rankone{1} & \ranktwo{2} \\

\Xhline{5\arrayrulewidth}



\end{tabular}}
\caption{ Toxicity Ranking using an Independent Copula portfolio aggregation of Perspective API metrics.} 
\label{tab:toxicityMain}
\end{table*}
\noindent \textbf{Toxicity Results and Analysis} 
Table \ref{tab:toxicityMain} shows the results of our tests on the combined set of toxic and non toxic prompts. Ablation studies on individual sets are given in Table \ref{tab:toxicity} in Appendix \ref{app:ECvsIC}. We make a few observations: First, overall the portfolio with independent copula approach agrees well with the rank aggregation of per-metric rankings. The portfolio is more computationally efficient as it needs to run the stochastic dominance test only on the portfolio, rather than running $N$ tests and aggregating them via rank aggregation.  An ablation study on empirical copula in Appendix \ref{app:AddExp} shows that it leads to a similar ranking as  the Independent Copula.
Secondly, on this dataset the R-FSD and R-SSD agree, with a few exceptions. Interestingly, when comparing models on model generation only, on toxic prompts MosaicML MPT stands out, while on non toxic prompts  Llama2 7B stands out and on the combined set Mosaic ML MPT stands out. On the combined set, we see for the llama family  that increased model size increases the toxicity of generations. This is in line with findings in the recent TrustLLM benchmark \cite{sun2024trustllm}.

\section{Conclusion}

In this paper we introduced  a distributional framework  for risk aware benchmarking and comparison of foundation models based on  multi-metric evaluations. Our framework has potential beyond the current applications presented here, being applicable wherever statistical significance while ranking assets for decision making is needed. We believe our tools for training models to be risk averse can be of significant use to practitioners and serve as a stepping stone towards solving the AI alignment problem.   

\flushcolsend
\section*{Impact Statement}
This paper presents a risk aware framework for benchmarking LLMs. In benchmarking LLM the stochastic nature of their generation and in presence of multiple metrics to be evaluated, our work offers a solution that gives raise to 1)  a sound aggregation of the metrics via the copula method 2) a risk aware evaluation that takes into account tail events of misalignment and not only the average behaviors thanks to the use of stochastic orders 3)  quantifies the uncertainty of the evaluation via statistical significance  testings. The potential societal
consequences of our work falls under AI governance as it allows a rigorous certification of compliance of LLMs with multitude of safeguards and dimensions. 

\bibliography{iclr2024_conference}
\bibliographystyle{icml2024}
\flushcolsend

\newpage
\appendix
\onecolumn

\begin{center}
    \large{\textbf{Supplementary Material}}
\end{center}

\section{Ablation Studies}\label{app:ablations}

\paragraph{Metrics Aggregation Versus Portfolio}

For portoflio,  computing ranking using  FSD and SSD  including the portfolio computation on $5K$ samples for $5$ bootstrap samples , we have mean execution time of  $32.01 \pm 4.51$ s. For FSd and SSD  ranking computation for all metrics, followed by rank  using pearson distance the   execution time is of  $254.99 \pm  16.76$ s. On the other hand, we observe on the mix-instruct dataset a consistency of ranks between these two approaches (FSD or SDD on portfolio \& FSD or SSD on all metrics followed by rank aggregation) as quantified by the kendall-tau similarity between the ranks: 
\begin{enumerate}
    \item Kendall Tau(R-SSD@P(IC), RA(R-SSD@M)) = 0.848
    \item  Kendall Tau(R-FSD@P(IC), RA(R-FSD@M)) = 0.878
    \item Kendall Tau(R-SSD@P(EC), RA(R-SSD@M)) = 0.848 
    \item Kendall Tau(R-FSD@P(EC), RA(R-SSD@M)) = 0.848
\end{enumerate}
    \vskip -0.1in

We see that these two approaches lead to similar ranks while portfolio approach leads to 7x speedups when using IC portfolios. 


           \vskip -0.2in


\section{Transforming Discrete Relative ChatGPT Scores to Absolute Real Valued Scores }\label{app:chatGPT}

We follow \cite{jiang2023llm} in mapping discrete chatGPT scores to real valued ones. Note that chatGPT scores for comparing models A and B are discrete and are one of these 4 options: A is better, B is better, Both are equally good, Both are equally bad. 

Given $m$ models we construct for each prompt sample $\ell=1\dots N$ a $m\times m$ comparison matrix with chatGPT:
\[ X_{\ell,ij} =  + 1 ,  X_{\ell,ji} =  - 1  \text{ if model $i$ is better} \]
\[ X_{\ell,ij} =  - 1 ,  X_{\ell,ji} =  + 1  \text{ if model $j$ is better} \]
\[ X_{\ell,ij} =  X_{\ell,ji} =  +0.5  \text{ if model $i$ and $j$ equally good  } \] 
\[ X_{\ell,ij} =  X_{\ell,ji} =  -0.5  \text{ if model $i$ and $j$ equally bad  } \] 

Then each model will define the following scalar score at each sample $\ell$:

\[s_{\ell,i}:= \sum_{j=1}^m (X_{\ell,ij} - X_{\ell,ji}). \]
hence we have a distribution of chatGPT score for each model :

\[p_{i}= \frac{1}{N}\sum_{i=1}^N \delta_{s_{\ell,i}}, i=1\dots m. \]

Note that the scores $s_{\ell,i}$ take on even integer values between $-2m$ and $2m$ inclusive, we treat the support as continuous and consider the following kernel density estimator with Gaussian kernel of width $\sigma$:
\[\hat{p}^{(\sigma)}_{i}(t)= \frac{1}{N}\sum_{i=1}^N \varphi\left(\frac{t - s_{\ell,i}}{\sigma}\right), t\in\mathbb{R}, \:i=1\dots m, \]
where $\varphi$ is the standard normal density.
In Figure \ref{fig:enter-label_2} below we plot chatGPT scores kernel density estimates for two models, openassistant and flan-t5:

\begin{figure}[ht!]
    \centering
    \includegraphics[scale=0.5]{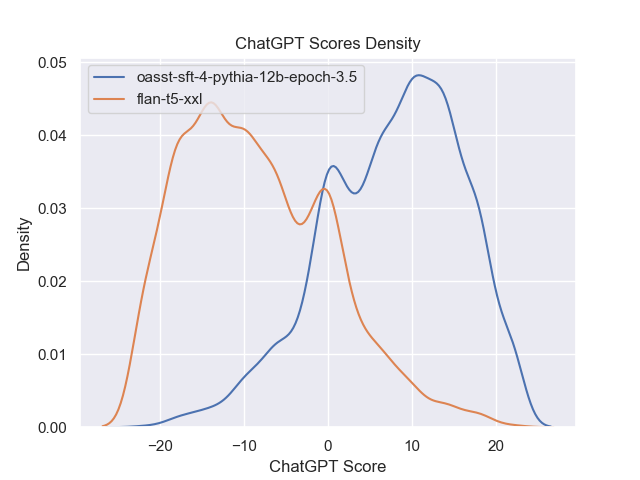}
    \caption{ChatGPT density scores for two models, open-assistant has clearly higher scores than the Flan-t5 models. }
    \label{fig:enter-label_2}
\end{figure}

\section{Multi-Testing Algorithm for Relative and Almost Stochastic Dominance}\label{app:alg}
Our multi-testing algorithm for relative and almost stochastic dominance is detailed in Algorithm \ref{alg:SOMT}.

In a nutshell our multi-testing consists of the following steps: 
\begin{enumerate}
\item For evaluation of each model compute summary statistics , i.e quantiles and integrated quantiles. 
\item For all pairs of models, compute statistics of absolute and relative tests by computing violation ratios. 
\item Compute the variance of these statistics via bootstrapping.
\item Perform the hypothesis testing for all pairs models with a corrected confidence level taking into account the number of all pairs 
\item Aggregate pairwise rankings to a rank using the Borda algorithm, that ranks the model by their number of wins in the stochastic dominance tests performed above. 

\end{enumerate}

\begin{algorithm}[htp!]
\caption{Stochastic Order Multi-testing (relative and \textcolor{violet}{\bf absolute})}
\label{alg:SOMT}
\begin{algorithmic}[1]
 \STATE {\bfseries Input:} $F_1, ...,F_{k}$, $k$ models we want to rank corresponding to empirical measure $p_1=\frac{1}{n}\sum_{i=1}^n \delta_{x^1_i}$, \dots $p_k=\frac{1}{n}\sum_{i=1}^n \delta_{x^k_i}$, \textbf{\textcolor{violet}{Threshold: $\tau$}.}
 \\
 \STATE {\bfseries Input:} Desired stochastic $\mathrm{order}\in\{1,2\}$, $B$ number of bootstraps, $m=K^2$ number of comparisons, significance level $\alpha$.
    \STATE {\bfseries \textcolor{blue}{ Cache the bootstraps samples and their statistics}}  
    
    \FOR{$j=1$ {\bfseries to} $k$}
    \STATE $p^{0}_j\gets p_j$
    \STATE {\bfseries Get Quantiles and Integrated Quantiles}
    \STATE $\mathrm{Q}_{0,j} \gets \textsc{GetQuantiles}(p_j)$
    \STATE $ \mathrm{IQ}_{0,j} \gets \textsc{GetIntegratedQuantiles}(p_j)$
    \FOR{$b=1$ {\bfseries to} $B$}
    
    \STATE {\bfseries Get Quantiles and Integrated Quantiles}
     \STATE $p^{b}_j \gets   \textsc{ResampleWithReplacement}(p_j, n) $ \COMMENT{using quantiles and uniform}
    \STATE $\mathrm{Q}_{b,j} \gets \textsc{GetQuantiles}(p^b_j)$
    \STATE $ \mathrm{IQ}_{b,j} \gets \textsc{GetIntegratedQuantiles}(p^b_j)$
   
      \ENDFOR
        \ENDFOR
 \STATE {\bfseries\textcolor{blue}{ Compute all violation ratios}}
     \STATE $\varepsilon_{b,i,j}\gets \textsc{ComputeViolationRatios}(F^b_i,F^b_j,\mathrm{order})$ for $b=0\dots B$,\quad$i,j=1\dots k, i\neq j$ \COMMENT{ratio of $\mathrm{Q}$ or $\mathrm{IQ}$ of $j> i$ by total area}
     \STATE $\varepsilon_{b,i,i}=0, \forall ~ b, i$

  \STATE {\bfseries \textcolor{blue}{Compute the sum statistics }}
   
\FOR{$b=0$ {\bfseries to} $B$}
\FOR{$i=1$ {\bfseries to} $k$}
\STATE $\varepsilon^i_{b} \gets\frac{1}{k-1} \sum_{j}\varepsilon_{b,i,j}$ 
\ENDFOR
\ENDFOR
  \STATE {\bfseries \textcolor{blue}{Compute the relative statistics }}
\STATE $\Delta \varepsilon^{i,j}_b= \varepsilon^i_{b}- \varepsilon^j_{b},\forall b, i,j$

\STATE {\bfseries \textcolor{blue}{Compute the Bootstrap Variance }}
\FOR{$i=1$ {\bfseries to} $k$}
\FOR{$j=1$ {\bfseries to} $k$}
\STATE $\sigma_{ij}=\sqrt{\frac{1}{B-1}\sum_{b=1}^B( \Delta \varepsilon^{i,j}_b - \textsc{Mean}(\Delta \varepsilon^{i,j}_b,b) )^2}$ 
\STATE \textbf{\textcolor{violet}{$\sigma^{\mathrm{abs}}_{ij}=\sqrt{\frac{1}{B-1}\sum_{b=1}^B(  \varepsilon_{b,i,j} - \textsc{Mean}( \varepsilon_{b,i,j},b) )^2}$}}
\ENDFOR
\ENDFOR

\STATE {\bfseries \textcolor{blue}{Compute the test  }}
\STATE $\mathrm{Win}_{ij}= \mathrm{Win}^{\mathrm{abs}}_{ij}=0$
\FOR{$i=1$ {\bfseries to} $k$}
\FOR{$j=1$ {\bfseries to} $k$}
  \IF{$i \neq j$ and $\Delta \varepsilon^{i,j}_0 - \frac{1}{\sqrt{n}}\sigma_{ij}\Phi^{-1}(
\alpha/k^2) \leq 0$} 
    \STATE $\mathrm{Win}_{ij}=1$ \COMMENT{with confidence level $1-\alpha/k^2$}
    \ENDIF
\textcolor{violet}{\IF{$i \neq j$ and $ \varepsilon_{0.i,j} - \frac{1}{\sqrt{n}}\sigma^{\mathrm{abs}}_{ij}\Phi^{-1}(
\alpha/k^2) \leq \tau$} 
    \STATE $\mathrm{Win}^{\mathrm{abs}}_{ij}=1$ \COMMENT{with confidence level $1-\alpha/k^2$}
\ENDIF}
\ENDFOR
\ENDFOR

$\mathrm{rank} =\textsc{Borda}(\mathrm{Win})$ \COMMENT{with confidence level $1-\alpha$}\\
\textcolor{violet}{$\mathrm{rank}_{\mathrm{abs}} =\textsc{Borda}(\mathrm{Win}^{\mathrm{abs}})$ \COMMENT{with confidence level $1-\alpha$}}
\STATE \textbf{Return} $\mathrm{rank}$, \textcolor{violet}{$\mathrm{rank}_{\mathrm{abs}}$}
\end{algorithmic}
\end{algorithm}

\begin{algorithm}
\caption{\textsc{ComputeViolationRatios}($F_a$,$F_b$,order)}
\begin{algorithmic}
\IF{order =1}
\STATE \textbf{Return}\ $\varepsilon_{\mathsf{W}_2}(F_a,F_b)$ in Definition \ref{def:RatioFSD}
\ELSIF{order=2}
\STATE \textbf{Return} $\varepsilon_{IQ}(F_a,F_b)$ in Definition \ref{def:RatioSSD}
\ENDIF
\end{algorithmic}
\end{algorithm}

\section{Absolute or Almost Stochastic Dominance} \label{app:SD}
\textbf{Almost FSD ($\varepsilon$-FSD)}  Following \cite{leshno2002preferred}, \cite{del2018optimal} relaxed FSD via the violation ratio of FSD:
\begin{assbox}
\begin{definition}[FSD Violation Ratio \citep{del2018optimal} ]  For $F_X\neq F_Y$ define the violation ratio:
\[\varepsilon_{\mathsf{W}_2}(F_{X},F_{Y})= \frac{\int_{A^{(1)}_0} (F_{X}^{(-1)}(t) - F_{Y}^{(-1)}(t))^2 dt}{ \int_{0}^1(F^{(-1)}_{X}(t) -F_{Y}^{(-1)}(t))^2dt} =\frac{\int_0^1(F_{Y}^{(-1)}(t)-F_{X}^{(-1)}(t))^2_+ dt}{\mathsf{W}^2_2(F_{X},F_{Y})},\]
where 
$A^{(1)}_0= \Big\{t \in (0,1):F_{Y}^{(-1)}(t)> F^{(-1)}_{X}(t) \Big \}$
is the violation set the relation $X  \underset{\text{FSD}}{\succcurlyeq} Y $, and $\mathsf{W}_2$ is the Wasserstein$-2$ distance.
\label{def:RatioFSD}
\end{definition}
\end{assbox}
Note that $0\leq \varepsilon_{\mathsf{W}_2}(F_{X},F_{Y})\leq 1$, with value $0$ if  $X  \underset{\text{FSD}}{\succ} Y $ and $1$ if $Y  \underset{\text{FSD}}{\succ} X$. For $\varepsilon \in (0,\frac{1}{2}]$, the relaxed FSD can be therefore defined as follows 
\begin{assbox}
\begin{equation}
X  \underset{\varepsilon- \text{FSD}}{\succcurlyeq} Y \iff \varepsilon_{\mathsf{W}_2}(F_{X},F_{Y}) \leq \varepsilon.
\end{equation}
\end{assbox}
Figure \ref{fig:epsilonFSD} in Appendix \ref{app:fig} illustrates $\varepsilon$-FSD, dashed areas represent the violation set. 

\textbf{Almost SSD ($\varepsilon$-SSD)} Note that the original definition of $\varepsilon$-FSD of $X$ on $Y$   in \cite{leshno2002preferred} is an $L_1$ definition and uses the CDF rather than quantiles: $\int_{-\infty}^{\infty}(F_{X}(x)-F_{Y}(x))_+ d x\leq \varepsilon \int_{\infty}^{\infty} |F_{X}(x) -F_{Y}(x) |dx .$  \cite{tzeng13} gave a similar $L_1$ definition for $\varepsilon$-SSD using the second performance function $F^{(2)}(.)$. According to \cite{tzeng13},  $X$ dominates $Y$ in the $\varepsilon$-SSD if  $\int_{-\infty}^{\infty}(F_{X}^{(2)}(x)-F_{Y}^{(2)}(x))_+ dt \leq \varepsilon \int_{-\infty}^{+\infty} |F^{(2)}_{X}(x) -F^{(2)}_{Y}(x)|dx .$ Following \cite{del2018optimal}, we redefine $\varepsilon$-SSD using second quantiles and with a $L_2$ definition, this eases the analysis  and practically the integration is on $(0,1]$ rather 
than $(-\infty, \infty)$.

We define the SSD violation ratio as follows:
\begin{assbox}
\vskip -0.05in
\begin{definition}[SSD Violation Ratio ]  For $F_X\neq F_Y$ define the violation ratio:
\[\varepsilon_{IQ}(F_{X},F_{Y})= \frac{\int_{A^{(2)}_0} (F_{X}^{(-2)}(t) - F_{Y}^{(-2)}(t))^2 dt}{ \int_{0}^1(F^{(-2)}_{X}(t) -F_{Y}^{(-2)}(t))^2dt} =\frac{\int_0^1(F_{Y}^{(-2)}(t)-F_{X}^{(-2)}(t))^2_+ dt}{d_{IQ}^2(F_{X},F_{Y})},\]
where 
$A^{(2)}_0= \Big\{t \in (0,1):F_{Y}^{(-2)}(t)> F^{(-2)}_{X}(t) \Big \}$
is the violation set the relation $X  \underset{\text{SSD}}{\succcurlyeq} Y $, and $d_{IQ}$ is the $L_2$ distance between the Integrated Quantiles $(F^{(-2)})$.
\label{def:RatioSSD}
\end{definition}
\end{assbox}

We are now ready to define $\varepsilon$-SSD, for $\varepsilon \in(0,\frac{1}{2})$:
\begin{assbox}
\begin{equation}
X  \underset{\varepsilon- \text{SSD}}{\succcurlyeq} Y \iff \varepsilon_{IQ}(F_{X},F_{Y}) \leq \varepsilon
\label{eq:epsSSD}
\end{equation}
\end{assbox}
Figure \ref{fig:epsSSD} in Appendix \ref{app:fig}  illustrates the second order, dashed areas represent the violation set of SSD of $X$ on $Y$.  Integrated  quantiles fully characterize one dimensional distributions as can be seen from the Theorem \ref{thm:metric} stated and  proved in Appendix \ref{app:metric}:

\section{Related Works on Stochastic Dominance}

\textbf{Stochastic Dominance} In \cite{dror2018hitchhiker,dror2019deep,ulmer2022deep,simpson2021statistical} a distributional assessment of the models based on stochastic dominance was proposed to overcome the limitations of the ubiquitous Mean-Variance Risk model used in machine learning. 

\citep{ulmer2022deep} used first order almost stochastic dominance and advocated for selecting a model $A$ over $B$ based on a metric $m_i$ if:
$M_{i}| A,X  \underset{\varepsilon-\text{FSD}}{\succcurlyeq} M_{i}| B,X.$ We expand this to the Relative-FSD. In natural language (and other) applications, it is often crucial to mitigate the risk of outputs with low metrics, especially when those metrics quantify important socio-technical guardrails such as model's toxicity, safety, or robustness.
 Unfortunately, the first stochastic ordering does not capture an assessment of the left tail behavior of  $M_{i}| A,X $ and $M_{i}| B,X $ and hence does not provide a risk-aware benchmarking \cite{ogryczak2002dual}.
 To alleviate this issue, we instead consider the \emph{second} order stochastic ordering  and use  our second order \emph{almost} or \emph{relative} stochastic dominance tests introduced in Section \ref{Sec:testing} for selecting a model A if:$M_{i}| A,X  \underset{\varepsilon \text{ or } R-\text{SSD}}{\succcurlyeq} M_{i}| B,X.$

\section{Supplement Discussions }
\subsection{Mean Risk Models}\label{app:MRM}
\begin{table}[ht!]
\centering
\resizebox{\textwidth}{!}{\begin{tabular}{l|l|l|c}
Name & Risk Measure  & $\alpha-$ consistency with SSD \\
\hline
 Standard deviation & $\sigma_{X}=\sqrt{\mathbb{E}(X-\mu_{X})^2}$ &  not consistent \\
Absolute semi deviation & $\delta_{X}=\mathbb{E}(\mu_{X}-X)_{+}$ & $1-$ consistent\\
Negative Tail Value at Risk & $-\mathrm{TVAR}_{X}(p)= - \frac{F^{(-2)}(p)}{p}$& $1-$ consistent for all $p\in(0,1]$\\
Mean absolute deviation from a quantile &$h_{X}(p)=\mu_{x}-\frac{F^{(-2)}_{X}(p)}{p}$ & $1-$ consistent for all $p \in (0,1]$\\
Gini Tail & $\Gamma_{X}=2\int_{0}^1(\mu_Xp-F^{(-2)}_{X}(p))dp$& $1-$ consistent \\
 \hline
\end{tabular}}
\caption{Risk models and their $\alpha-$consistency with SSD. }
\label{tab:RiskC}
\vskip -0.15in
\end{table}

Note that several risks in Table \ref{tab:RiskC} use the second quantile function as part of a benchmarking of the left tails of the outcomes. 
\subsection{$\delta-$ Consistency of Gini-Risk Models with $\varepsilon$-SSD}
\paragraph{ $\delta-$ Consistency of Gini-Risk Models with $\varepsilon$-SSD} We relax the definition of $\alpha-$ consistency of mean-risk models with SSD to $(\alpha,\delta)$ consistency with $\varepsilon$-SSD as follows:

\begin{definition}[$(\alpha,\delta)$ consistency of MRM with $\varepsilon$-SSD] A mean-risk model $(\mu_{X},r_{X})$ is $(\alpha,\delta)$ consistent with $\varepsilon$-SSD,  if there exists $\alpha,\delta >0$ such that $
X\underset{\varepsilon\text{-SSD}}{\succcurlyeq} Y  \implies \mu_X -\alpha r_x +\delta \geq \mu_{Y}-\alpha r_{Y} 
$
\end{definition}
It is easy to see that the Mean-Gini tail MRM of  $X$ and $Y$ is consistent with their $\varepsilon$-SSD:
\begin{proposition}
The Mean-Gini Tail MRM   is $(1,2\varepsilon^{\frac{1}{2}} d_{IQ}(F_{X},F_{Y}))$  consistent with $\varepsilon$-SSD.
\label{pro:deltacons}
\end{proposition}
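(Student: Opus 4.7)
The plan is to first rewrite $\mu_X - \Gamma_X$ as a simple integral of the integrated quantile $F^{(-2)}_X$, and then bound the difference $(\mu_X-\Gamma_X)-(\mu_Y-\Gamma_Y)$ from below by splitting it into positive and negative parts and applying Cauchy--Schwarz to the violation integral.

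\textbf{Step 1 (simplify the Mean--Gini functional).} Using $\int_0^1 p\, dp = \tfrac12$, I would show
\[
\mu_X - \Gamma_X \;=\; \mu_X - 2\int_0^1\!\big(\mu_X\, p - F^{(-2)}_X(p)\big)\,dp \;=\; 2\int_0^1 F^{(-2)}_X(p)\,dp.
\]
Subtracting the analogous identity for $Y$ yields
\[
(\mu_X-\Gamma_X)-(\mu_Y-\Gamma_Y) \;=\; 2\int_0^1 \big(F^{(-2)}_X(p) - F^{(-2)}_Y(p)\big)\,dp.
\]

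\textbf{Step 2 (isolate the violation direction).} Decomposing the integrand into positive and negative parts, the right-hand side equals
\[
2\int_0^1 \big(F^{(-2)}_X - F^{(-2)}_Y\big)_+\,dp \;-\; 2\int_0^1 \big(F^{(-2)}_Y - F^{(-2)}_X\big)_+\,dp,
\]
where the second term is precisely the integral over the violation set $A_0^{(2)}$ of $X \underset{\mathrm{SSD}}{\succcurlyeq} Y$. To obtain the claimed $\delta$-consistency, it suffices to upper bound this second term by $\varepsilon^{1/2} d_{IQ}(F_X,F_Y)$, since then dropping the nonnegative first term only decreases the left-hand side.

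\textbf{Step 3 (Cauchy--Schwarz against the $\varepsilon$-SSD hypothesis).} By Cauchy--Schwarz on $(0,1]$,
\[
\int_0^1 \big(F^{(-2)}_Y - F^{(-2)}_X\big)_+\,dp \;\leq\; \Big(\int_0^1 \big(F^{(-2)}_Y - F^{(-2)}_X\big)_+^2\,dp\Big)^{1/2}.
\]
By Definition \ref{def:RatioSSD} the inner integral equals $\varepsilon_{IQ}(F_X,F_Y)\cdot d_{IQ}^2(F_X,F_Y)$, and the $\varepsilon$-SSD hypothesis $\varepsilon_{IQ}(F_X,F_Y)\leq \varepsilon$ gives the desired bound $\varepsilon^{1/2}\,d_{IQ}(F_X,F_Y)$.

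\textbf{Step 4 (assemble).} Combining Steps 1--3,
\[
(\mu_X-\Gamma_X)-(\mu_Y-\Gamma_Y) \;\geq\; -\,2\,\varepsilon^{1/2}\, d_{IQ}(F_X,F_Y),
\]
which is exactly $(1,\delta)$-consistency with $\delta = 2\varepsilon^{1/2} d_{IQ}(F_X,F_Y)$. There is no serious obstacle here: once the identity in Step 1 is in hand the rest is a one-line Cauchy--Schwarz. The only mildly subtle point is recognizing that it is the Mean--Gini functional (and not another Mean--Risk model) whose ``risk minus mean'' collapses cleanly to a plain integral of $F^{(-2)}$, which is what makes the $L^2$ violation ratio from Definition \ref{def:RatioSSD} the natural quantity to invoke.
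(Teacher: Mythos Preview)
Your proposal is correct and follows essentially the same approach as the paper: both arguments reduce $\mu_X-\Gamma_X$ to $2\int_0^1 F_X^{(-2)}(p)\,dp$, drop the nonnegative contribution from $[0,1]\setminus A_0^{(2)}$, and bound the remaining violation integral via Cauchy--Schwarz together with the $\varepsilon$-SSD hypothesis. Your write-up is slightly more explicit in separating out the identity in Step~1 and the positive/negative decomposition in Step~2, but the mathematics is identical.
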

\begin{proof} [Proof of Proposition \ref{pro:deltacons}]
\begin{align*}
\mu_X-\Gamma_{X}&=\mu_X- 2\int_{0}^1(\mu_X p-F^{(-2)}_{X}(p))dp= 2 \int_{0}^1 (F^{(-2)}_{X}(p) - F^{(-2)}_{Y}(p) + F^{(-2)}_{Y}(p)) dp\\
&= 2 \int_{0}^1 F^{(-2)}_{Y}(p)  + 2 \int_{A^{(2)}_0} (F^{(-2)}_{X}(p) - F^{(-2)}_{Y}(p)) dp +  2 \underbrace{\int_{[0,1]/A^{(2)}_0} (F^{(-2)}_{X}(p) - F^{(-2)}_{Y}(p)) dp}_{\geq 0}\\
&\geq 2 \int_{0}^1 F^{(-2)}_{Y}(p)  - 2 \int_{A^{(2)}_{0}} |F^{(-2)}_{X}(p) - F^{(-2)}_{Y}(p))| dp \\
&= \mu_Y-\Gamma_{Y} -  2\int_0^1 (F^{(-2)}_{Y}(p)- F^{(-2)}_{X}(p))_+dp\\
&\geq \mu_Y-\Gamma_{Y} - 2 \left(\int_{0}^1 dp\right)^{\frac{1}{2}} \left(\int_{0}^1 (F^{(-2)}_{Y}(p) - F^{(-2)}_{X}(p))^2_+ dp\right)^{\frac{1}{2}} 
(\text{Cauchy-Schwartz})\\
&\geq \mu_Y-\Gamma_{Y} -2 \varepsilon^{\frac{1}{2}}d_{IQ}(F_{X},F_{Y}) (\text{By assumption } X\underset{\varepsilon- \text{SSD}}{\succcurlyeq} Y  )
\end{align*}
\end{proof}

\subsection{Rank Aggregation}\label{app:rankagg}
Given $N$ ranks $\pi_{i},i=1\dots N$ represented as  permutations in $S_k$, the rank aggregation in \citep{Pihur2009RankAggregAR} solves the following problem :
\[ \min_{\pi \in S_k} \sum_{i=1}^N \alpha_i d(\pi,\pi_i),\]
where $\alpha_i\geq0, \sum_{i=1}^N \alpha_i=1$ represent importance of each ranking and $d$ is a distance between permutations. \citep{Pihur2009RankAggregAR} have multiple choices of distance such as Pearson or Kendall's-Tau. We fixed through out our experiments the distance to Pearson. 
 
 \subsection{Mean Win Rate and CDF normalizers in portfolio}
 
 To unpack the notations in \eqref{eq:portfolio}, consider a distribution $\mathcal{A}$ on models space. For a sample $X\sim D_i$ and a model $A \sim \mathcal{A}$, the metric $m_i()$ normalization through its CDF can be written as follows:
\begin{equation}
F_{M_i}(m_i(A(X))= \mathbb{E}_{B\sim \mathcal{A}} \mathbb{E}_{Y \sim D_i} \mathbbm{1}_{m_i(B(Y) \leq m_{i}(A(X))}.
\label{eq:CDFnorm}
\end{equation}
Hence for a model $A$ on each evaluated sample the CDF normalizer computes a soft ranking of  the evaluation of the model $A$ with a metric  $m_i$ on the sample $X$  with respect to all models and all samples.
\begin{remark}[Mean Win Rate ] Note that in LLM leaderborads such as HELM and Hugging face, the performance of a model $A$ evaluated with a metric $m_i$, is summarized via a Mean Win Rate (MWR) aggregated on models level looking on expected metrics
\begin{equation}
\mathrm{MWR}_{A,M_i} = \mathbb{E}_{B \sim \mathcal{A}}\mathbbm{1}_{\mathbb{E}_{X\sim D_i} \left[ m_i(B(X))\right] \leq \mathbb{E}_{X\sim D_i} \left[ m_i(A(X))\right]},
\label{eq:MWRagg}
\end{equation}
or aggregated on sample level marginalizing on models with a $\max$:
\begin{equation}
\overline{\mathrm{MWR}}_{A,M_i} = \mathbb{E}_{X\sim D_i} \mathbbm{1}_{ \max_{B\neq A }m_i(B(X)) \leq m_i(A(X))},
\label{eq:MWRsample}
\end{equation}
Contrasting \eqref{eq:CDFnorm} , \eqref{eq:MWRagg} and \eqref{eq:MWRsample} we see that instead of looking at the MWR summary statistics  that does not allow to consider all order statistics and relative ordering as well  the risks of  tails events, we consider a full distributional benchmarking in the metrics portfolio approach. 
\end{remark}

\section{Figures}\label{app:fig}
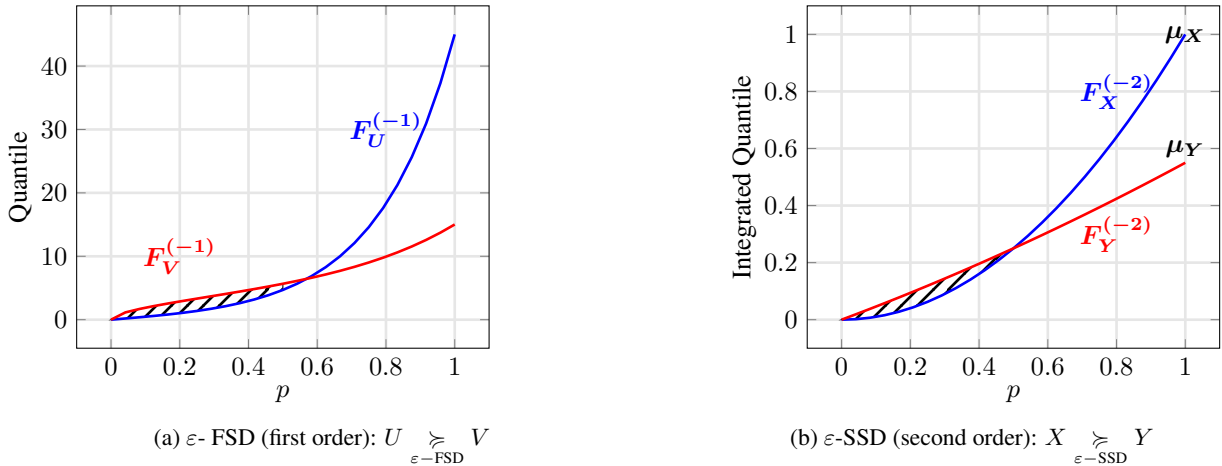
\begin{figure}[ht!]
\captionsetup[subfigure]{justification=centering}
\begin{subfigure}[b]{0.50\linewidth}
       
\begin{tikzpicture}[scale=0.80] 
    \begin{axis}[enlargelimits=0.1, grid=both,
    grid style={line width=1pt, draw=gray!20},xlabel={$p$ },
    ylabel={Quantile}]
    \addplot[name path=f,domain=0:1,blue,line width= 1pt] {sinh(4.5*x)};
    \addplot[name path=g,domain=0:1,red,line width= 1pt] {sinh(3*x)+5*sqrt(x) };

    \path[name path=axis] (axis cs:0,0) -- (axis cs:1,0);

    \addplot [draw,
            pattern=flexible hatch,
            hatch color=black,
            hatch distance=7pt,
            hatch thickness=1 pt,
        ] fill between [
            of=f and g,
            soft clip={domain=0:0.5},
        ];

    \node  at (axis cs:  0.8,  30) {\textcolor{blue}{$\bm{F^{(-1)}_{U}}$}};
    \node at (axis cs:  0.2,  10) {\textcolor{red}{$\bm{F^{(-1)}_{V}}$}};

    \end{axis}
\end{tikzpicture}  
\caption{$\varepsilon$- FSD (first order): $U  \underset{\varepsilon-\text{FSD}}{\succcurlyeq} V$ }
\label{fig:epsilonFSD}
\end{subfigure}
\hfill
\begin{subfigure}[b]{0.50\linewidth}
         \centering
\begin{tikzpicture} [scale=0.80]
    \begin{axis}[enlargelimits=0.1, grid=both,
    grid style={line width=1pt, draw=gray!20},xlabel={$p$ },
    ylabel={Integrated Quantile}]
    \addplot[name path=f,domain=0:1,blue,line width= 1pt] {x^2};
    \addplot[name path=g,domain=0:1,red,line width= 1pt] {0.1*x^2+0.45*x};

    \path[name path=axis] (axis cs:0,0) -- (axis cs:1,0);

    \addplot [draw,
            pattern=flexible hatch,
            hatch color=black,
            hatch distance=7pt,
            hatch thickness=1 pt,
        ] fill between [
            of=f and g,
            soft clip={domain=0:0.5},
        ];
   
    \node  at (axis cs:  1,  1) {$\bm{\mu_X}$};
    \node  at (axis cs:  1,  .6) {$\bm{\mu_{Y}}$};
    \node  at (axis cs:  0.8,  0.8) {\textcolor{blue}{$\bm{F^{(-2)}_{X}}$}};
    \node at (axis cs:  0.8,  0.3) {\textcolor{red}{$\bm{F^{(-2)}_{Y}}$}};

    \end{axis}
\end{tikzpicture}  
\caption{$\varepsilon$-SSD (second order): $X  \underset{\varepsilon-\text{SSD}}{\succcurlyeq} Y$ }
\label{fig:epsSSD}
\end{subfigure}
\caption{ \textbf{(a) An Example of Almost First Order Stochastic Dominance:} Plots of quantile functions of $U$ and $V$. Dashed areas is the violation set of first order stochastic dominance of $U$ on $V$. \textbf{(b) An Example of Almost Second Order Stochastic Dominance:} Plots of integrated quantile functions; dashed area is the violation set for the second order stochastic dominance of $X$ on $Y$. }
    \label{fig:enter-label}
    \vskip -0.25in
\end{figure}

\section{Central Limit Theorems} \label{app:theory}
\subsection{CLT for $\varepsilon$-SSD}
\begin{thmbox}
\begin{theorem}[Central Limit Theorem for $\varepsilon$-SSD]\label{thm:CLT1}
    Assume that $F_{X}$, $F_{Y}$ are supported on intervals\footnote{The interval for $F_X$ and for $F_Y$ need not coincide.} in $[-M,M]$, and have pdfs $f_x,f_y$ such that $\frac{f'_x(p)}{f^3_x(p)}$, $\frac{f_y'(p)}{f_y^3(p)}$ are bounded almost everywhere on the support of $f_x$ and $f_y$ respectively. Assume we have $n$ samples from $F_X$ and $m$ samples from $F_{Y}$, with $n, m \rightarrow \infty$ such that $\frac{n}{n+m} \rightarrow \lambda$ for some $\lambda$. Then 
    \begin{align*}
\sqrt{\frac{mn}{m+n}}\left(\varepsilon_{IQ}(F^n_{X},F^m_{Y}) - \varepsilon_{IQ}(F_{X},F_{Y})\right) 
&\rightarrow \mathcal{N}(0,\sigma_\lambda^2(F_{X},F_{Y}))
\end{align*}
where 
\[
\sigma_\lambda^2(F_{X},F_{Y}) = \frac{1}{d^8_{IQ}(F_{X},F_{Y})}\left[(1-\lambda)\mathrm{Var}(v_X(U)) + \lambda \mathrm{Var}(v_Y(U)) \right],
\]
for $U\sim \mathrm{Unif}[0,1]$, $v_Y(t) = 2  \left(\frac{1}{f_y(F_{Y}^{-1}(t))}\right)\left( \int_t^1 (F^{(-2)}_{X}(p)  - F^{(-2)}_{Y}(p) )_+ dp\right),$ and $v_X(t) = 2  \left(\frac{1}{f_x(F_{X}^{-1}(t))}\right)\left( \int_t^1 (F^{(-2)}_{X}(p)  - F^{(-2)}_{Y}(p) )_- dp\right).$
\end{theorem}
\end{thmbox}

\begin{remark}[Non-independent samples]\label{remark:dependent}
Theorem \ref{thm:CLT1} assumes that the $n$-sample from $F_{X}$ is independent of the $m$-sample for $F_{Y}$. Consider instead the setting where there are $n$ samples from $F_{X}$ and $F_{Y}$ that are dependent (e.g. $X$, $Y$ are evaluations of different models applied to the same data). We can describe general dependence structure as the following. Suppose $(X,Y)$ has marginals $X \sim F_{X}$, $Y \sim F_{Y}$, with some unknown dependence structure (optionally described by the copula $C_{XY}(u_x,u_y) = \mathrm{Pr}(F_{X}(X) \leq u_x, F_{Y}(Y) \leq u_y)$). Let
\[
(U_x, U_y) = (F_{X}(X), F_{Y}(Y)) \sim C_{XY}.
\]
Note that $U_x$ and $U_y$ have marginals equal to $\mathrm{Unif}([0,1])$, but $U_x$ and $U_y$ may be dependent. Hence the variances in each term of the decomposition \eqref{eq:dcmp} in the appendix cannot be added. Instead, one should modify the result of Theorem \ref{thm:CLT1} to use
\[
\bar{\sigma}_\lambda^2(F_{X},F_{Y}) = \frac{1}{d^8_{IQ}(F_{X},F_{Y})}\mathrm{Var}\left[v_X(U_x) + v_Y(U_y) \right].
\]
\end{remark}

\subsection{ CLT for Relative Statistics  }
 We focus here on presenting the Central Limit Theorem for SSD. The relative FSD has a similar form and we omit its statement here.  

\begin{thmbox}
\begin{theorem}[Central limit Theorem for Relative SSD]\label{thm:relative}
    Assume $F_{1n}, \dots, F_{kn}$ are available and independent, and each $F_i$ satisfies the conditions of Theorem \ref{thm:CLT1}. Then
\[
\sqrt{n}\left(\Delta \varepsilon^{(2)}_{i_1,i_2}(F_{n})- \Delta \varepsilon^{(2)}_{i_1,i_2}(F) \right) \rightarrow_w \mathcal{N}\left(0,\frac{1}{(k-1)^2}\sum_{i=1}^k \sigma_i^2(i_1,i_2)\right). 
\]
where
\[
\sigma_{i}^2(i_1,i_2) = \left\{\begin{array}{ll} \mathrm{Var}\left( \frac{2 v^{(1)-}_{i_1 i_2}(U_i)}{d^4_{IQ}(F_{i_1}, F_{i_2})} + \sum_{j \neq i_1, i_2} \frac{v^{(1)-}_{i_1 j}(U_i)}{d^4_{IQ}(F_{i_1}, F_{j})} \right) & i = i_1\\
\mathrm{Var}\left( \frac{2 v^{(2)+}_{i_1 i_2}(U_i)}{d^4_{IQ}(F_{i_1}, F_{i_2})} - \sum_{j \neq i_1, i_2} \frac{v^{(1)-}_{i_2 j}(U_i)}{d^4_{IQ}(F_{i_2}, F_{j})} \right) & i = i_2\\
\mathrm{Var}\left(\frac{v^{(2)+}_{i_1 j}(U_i)}{d_{IQ}^4(F_{i_1}, F_j)}-\frac{v^{(2)+}_{i_2 j}(U_i)}{d_{IQ}^4(F_{i_2}, F_j)}\right)& i \neq i_1, i_2\end{array}\right.
\]
for $U_i \sim \mathrm{Unif}([0,1])$ all independent, and $v^{(1),-}_{ij}(t) = 2  \left(\frac{d F_i^{-1}(t)}{dt}\right)\left( \int_t^1 (F_i^{(-2)}(p)  - F_j^{(-2)}(p))_- dp\right),v^{(2),+}_{ij}(t) = 2  \left(\frac{d F_j^{-1}(t)}{dt}\right)\left( \int_t^1( F_i^{(-2)}(p)  - F_j^{(-2)}(p))_+ dp\right).$
\end{theorem}
\end{thmbox}
\begin{remark}[Dependent samples]
    If the $F_{in}$ are dependent, a similar expression to that shown in Remark \ref{remark:dependent} for the absolute testing case also holds here. The statement is omitted.
\end{remark}

\section{Proof of Theorem \ref{thm:metric}}
\label{app:metric}

\begin{thmbox}
\begin{theorem}[$d_{IQ}$ is a metric]\label{thm:metric} $d_{IQ}$ is a metric on the space of univariate distributions with continuous CDF, moreover, it metrizes the weak topology.
\end{theorem}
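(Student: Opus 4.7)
First, the metric axioms. Nonnegativity, symmetry, and the triangle inequality are inherited directly from the $L^2(0,1)$ norm structure of $d_{IQ}$; only identity of indiscernibles requires argument. Suppose $d_{IQ}(F_X, F_Y)=0$. Then $F_X^{(-2)}=F_Y^{(-2)}$ almost everywhere on $(0,1]$. Since each $F^{(-2)}$ is the indefinite integral of the locally integrable function $F^{(-1)}$, it is absolutely continuous on compact subintervals of $(0,1]$ and hence continuous, so the a.e.\ equality extends to all of $(0,1]$. Differentiating recovers $F_X^{(-1)}=F_Y^{(-1)}$ almost everywhere on $(0,1]$, and by left-continuity this holds pointwise everywhere. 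Since the quantile function uniquely determines the distribution, $F_X=F_Y$.

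Next, the topological claim. My plan is to prove $d_{IQ}(F_n,F)\to 0$ if and only if $F_n\Rightarrow F$, working on the subspace of continuous-CDF distributions with enough integrability to make $d_{IQ}$ finite (an issue I return to below). For the forward direction, I would use that weak convergence to a continuous limit is equivalent to pointwise convergence of quantile functions at every $p\in(0,1)$. Integrating in $p$ and applying dominated convergence on compact subintervals of $(0,1)$ gives pointwise convergence $F_n^{(-2)}(p)\to F^{(-2)}(p)$. A Vitali/uniform-integrability argument, together with the boundary behavior of $F^{(-2)}$ at $0$ and $1$, upgrades pointwise to $L^2(0,1)$ convergence, i.e. $d_{IQ}(F_n,F)\to 0$.

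For the converse, the key observation is that each $F^{(-2)}$ is convex on $(0,1]$, since its derivative $F^{(-1)}$ is non-decreasing. A standard property of convex functions is that $L^p$ convergence of a sequence of convex functions to a convex limit forces pointwise convergence on the interior of the domain, and in turn pointwise convergence of the one-sided derivatives at every point of differentiability of the limit. Applied to $F_n^{(-2)}\to F^{(-2)}$, this yields $F_n^{(-1)}(p)\to F^{(-1)}(p)$ at almost every $p\in(0,1)$, and since $F$ is continuous this is equivalent to $F_n\Rightarrow F$ by the standard quantile criterion.

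The main obstacle is the converse direction: integration smooths, and without an extra structural hypothesis, $L^2$ control of the antiderivatives would not force pointwise control of the quantile functions. The convexity of $F^{(-2)}$, which comes for free from monotonicity of $F^{(-1)}$, is precisely what rescues the argument and explains why $d_{IQ}$ behaves like a genuine Wasserstein-type distance. A secondary subtlety is the well-posedness of $d_{IQ}$ itself: finiteness of $\int_0^1 (F_X^{(-2)} - F_Y^{(-2)})^2\,dt$ requires tail integrability at the boundary, so the theorem is best read as holding on the subspace of continuous-CDF distributions for which $F^{(-2)}\in L^2(0,1)$, a caveat that should be made explicit in a fully rigorous statement.
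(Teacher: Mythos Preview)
Your treatment of the metric axioms is essentially the paper's: both reduce the triangle inequality to the $L^2$ norm and handle identity of indiscernibles by passing from a.e.\ equality of $F^{(-2)}$ to pointwise equality via continuity, then differentiating to recover the quantile function.

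For metrizing the weak topology the routes diverge. The paper does not argue directly: it cites Theorem~10 of Gushchin--Borzykh (2017), which equates weak convergence with uniform convergence of $F^{(-2)}$, and concludes in one line. Your converse direction, exploiting convexity of $F^{(-2)}$ to pass from $L^2$ convergence to pointwise convergence of derivatives, is a genuinely different and self-contained argument; it is correct and more illuminating, since it makes explicit that monotonicity of $F^{(-1)}$ is the structural reason $L^2$ control of the antiderivative suffices to recover the quantiles.

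Your forward direction, however, has a real gap. Weak convergence alone does not force $d_{IQ}(F_n,F)\to 0$, and your proposed restriction to $F^{(-2)}\in L^2(0,1)$ does not repair it. Take $F_n$ equal to $\mathrm{Uniform}(0,1)$ with probability $1-1/n$ and $\mathrm{Uniform}(n^2,n^2+1)$ with probability $1/n$: each $F_n$ has continuous CDF and bounded (hence square-integrable) integrated quantile, and $F_n\Rightarrow\mathrm{Uniform}(0,1)$, yet a short computation gives $d_{IQ}^2(F_n,F)\asymp n\to\infty$. The ``Vitali/uniform-integrability argument'' you invoke is precisely what is missing, but it does not follow from the hypotheses you state---one needs something like uniformly bounded support or convergence of first moments. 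The paper's theorem statement is equally imprecise on this point; its later CLTs impose bounded support, under which both your argument and the paper's citation go through cleanly.
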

\end{thmbox}
First, we show that $d_{IQ}(F,G) = 0$ if and only if $F = G$. The forward direction is obvious. For the reverse direction, if $d_{IQ}(F,G) = 0$, then $F^{(-2)}(t) = G^{(-2)}(t)$ a.e. By the continuity of integrated quantiles, this implies $F^{(-2)} = G^{(-2)}$ everywhere. Then, since $F^{(-1)}(t)$ is simply the derivative of $F^{(-2)}(t)$ with respect to $t$\footnote{This follows because $F^{-2}$ is the integral of the finite-valued quantile function $F^{-1}(t)$.}, $F^{(-1)} = G^{(-1)}$ everywhere by differentiating both sides of $F^{(-2)}(t) = G^{(-2)}(t)$. Hence $F=G$ since distributions are uniquely determined by their quantile functions.

The triangle inequality follows from the triangle inequality of the $L_2$ norm, since $\sqrt{\int_0^1 (F^{(-2)}(t) - G^{(-2)}(t))^2 dt} = \|F^{(-2)}(t) - G^{(-2)}(t)\|_{L_2([0,1])}$. Hence $d_{IQ}$ is a metric. By Theorem 10 in \cite{gushchin2017integrated}, we know that random variable $X_{(i)} \rightarrow_w X$ (with cdf $F_{(i)}$) if and only if $F_{(i)}^{(-2)}$ converges uniformly to $F^{(-2)}$. Hence $d_{IQ}$ must metrize weak convergence.

\section{Proofs of Central Limit Theorems}

\subsection{Absolute Testing: Proof of Theorem \ref{thm:CLT1}}\label{app:CLT}

Note that for $U_i$ and $V_i$ an $n$-sample and an $m$-sample respectively from $\mathrm{Unif}([0,1])$, we can get $X_i$, $Y_i$ as $X_i = F^{-1}(U_i)$, $Y_i = G^{-1}(V_i)$. Let $H_{n,1}$ and $H_{m,2}$ be the empirical d.f.s of the $U_i$ and $V_i$ respectively. We have
\[
F_n^{-1}(t) = F^{-1}(H_{n,1}^{-1}(t)),
\]
hence
\[
F_n^{(-2)}(t) = \int_0^t F_n^{-1}(p)dp = \int_0^t F^{-1}(H_{n,1}^{-1}(p)) dp. 
\]

We are interested in
\[\varepsilon_{IQ}(F_n,G_m)= \frac{\int_{A_0} (F_n^{(-2)}(t) - G_m^{(-2)}(t))^2 dt}{d^2_{IQ}(F_n,G_m)},\]
where 
\[A_0= \Big\{t \in (0,1):G_m^{(-2)}(t)> F_n^{(-2)}(t) \Big \}, \]is the violation set.

It is shown in \cite{gushchin2017integrated} (Theorem 10 therein) that integrated quantiles converge uniformly, i.e. $F_n^{(-2)}(t) \rightarrow F^{(-2)}(t)$ pointwise. As an immediate consequence, we have
\[
\varepsilon_{IQ}(F_n,G_m) \rightarrow_{a.s.} \varepsilon_{IQ}(F,G).
\]


We apply the following decomposition and bound the two terms separately:
\begin{equation}\label{eq:decomp}
\varepsilon_{IQ}(F_n,G_m) - \varepsilon_{IQ}(F,G) = (\varepsilon_{IQ}(F_n,G_m) - \varepsilon_{IQ}(F,G_m)) + (\varepsilon_{IQ}(F,G_m) - \varepsilon_{IQ}(F,G)).
\end{equation}

We derive asymptotic normality of these terms for $G_m$, the proof for $F_n$ is identical by symmetry. 

We introduce the statistics
\[
S_m = \int_0^1 (F^{(-2)}(t) - G_m^{(-2)}(t))^2 dt
\]
\[
S_m^+ = \int_0^1 (F^{(-2)}(t) - G_m^{(-2)}(t))_+^2 dt
\]
\[
S_m^- = \int_0^1 (F^{(-2)}(t) - G_m^{(-2)}(t))_-^2 dt
\]
The nonrandom $S, S^+, S^-$ are defined similarly with $G$ instead of $G_m$.

Next, set
\[
T_m = \sqrt{m} (S_m - S)
\]
\[
T_m^+ = \sqrt{m} (S_m^+ - S^+)
\]
\[
T_m^- = \sqrt{m} (S_m^- - S^-).
\]

\begin{theorem}\label{thm:1}
Assume that $G$ is supported on an interval that is a subset of $[-M,M]$, and has pdf $g$ such that $\frac{g'(p)}{g^3(p)}$ is bounded almost everywhere on the support of $g$. Then 
    \[
    T_m = \alpha_{m,2}(v)+ o_P(1)
    \]
    \[
    T_m^+ = \alpha_{m,2}(v^+)+ o_P(1)
    \]
    \[
    T_m^- = \alpha_{m,2}(v^-)+ o_P(1)
    \]
    where we define $\alpha_{m,2}(t) = \sqrt{m} (t - H_{m,1}^{-1}(t))$ and $\alpha_{m,2}(v) = \int_0^1 v(t) \alpha_{m,2}(t) dt$, and
\[
v(t) = 2  \left(\frac{1}{g(G^{-1}(t))}\right)\left( \int_t^1 F^{(-2)}(p)  - G^{(-2)}(p) dp\right).
\]
\[
v^+(t) = 2  \left(\frac{1}{g(G^{-1}(t))}\right)\left( \int_t^1 (F^{(-2)}(p)  - G^{(-2)}(p))_+ dp\right),
\]
\[
v^-(t) = 2  \left(\frac{1}{g(G^{-1}(t))}\right)\left( \int_t^1 (F^{(-2)}(p)  - G^{(-2)}(p))_- dp\right).
\]

\end{theorem}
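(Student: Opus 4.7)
The statement is a functional delta-method / Bahadur-representation result: the nonlinear statistics $S_m, S_m^{\pm}$ are linearized around the true distribution $G$, and the linearization is expressed as an integral against the uniform quantile process $\alpha_{m,2}$. The plan is the same for all three statistics, so I focus on $T_m$ and sketch the sign/positive-part changes at the end.

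\textbf{Step 1: expand the square.} Using the pointwise identity $(a-b')^2-(a-b)^2=(b-b')\bigl(2a-b-b'\bigr)$ with $a=F^{(-2)}(t)$, $b=G^{(-2)}(t)$, $b'=G_m^{(-2)}(t)$, I would write
\begin{equation*}
S_m-S=\int_0^1\bigl(G^{(-2)}(t)-G_m^{(-2)}(t)\bigr)\bigl(2F^{(-2)}(t)-G^{(-2)}(t)-G_m^{(-2)}(t)\bigr)\,dt.
\end{equation*}
The uniform convergence $G_m^{(-2)}\to G^{(-2)}$ (Theorem 10 of \cite{gushchin2017integrated}, already invoked in the excerpt) lets me replace the second factor by $2(F^{(-2)}(t)-G^{(-2)}(t))$ up to an $o_P(1)$ error after multiplication by $\sqrt m$, yielding
\begin{equation*}
T_m=-2\sqrt m\int_0^1\bigl(G_m^{(-2)}(t)-G^{(-2)}(t)\bigr)\bigl(F^{(-2)}(t)-G^{(-2)}(t)\bigr)\,dt+o_P(1).
\end{equation*}

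\textbf{Step 2: linearize the integrated quantile process.} Writing $G_m^{-1}(p)=G^{-1}(H_{m,2}^{-1}(p))$ and Taylor-expanding at $p$, I would use $(G^{-1})'=1/g(G^{-1})$ and integrate to get
\begin{equation*}
G_m^{(-2)}(t)-G^{(-2)}(t)=\int_0^t\frac{H_{m,2}^{-1}(p)-p}{g(G^{-1}(p))}\,dp+R_m(t)=-\frac{1}{\sqrt m}\int_0^t\frac{\alpha_{m,2}(p)}{g(G^{-1}(p))}\,dp+R_m(t),
\end{equation*}
where the control of $R_m$ uniformly in $t$ is the core analytic step. The hypothesis that $g'/g^3$ is bounded a.e.\ is precisely the Csörgő--Révész / Kiefer regularity condition that guarantees a uniform Bahadur--Kiefer representation, i.e. $\sup_t|R_m(t)|=O_P(m^{-3/4}\log m)$; this is where I would have to be careful and spend the most effort, especially near the endpoints of $\mathrm{supp}(g)$ where $g$ may vanish (the boundedness of $g'/g^3$ is exactly what prevents the quantile process from blowing up there after integration).

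\textbf{Step 3: Fubini to recover the stated form.} Substituting Step 2 into Step 1 and interchanging the order of integration,
\begin{equation*}
T_m=2\int_0^1\frac{\alpha_{m,2}(p)}{g(G^{-1}(p))}\left(\int_p^1\bigl(F^{(-2)}(t)-G^{(-2)}(t)\bigr)\,dt\right)dp+o_P(1)=\alpha_{m,2}(v)+o_P(1),
\end{equation*}
which is the claimed identity, since the inner integral with its prefactor is exactly $v(p)$.

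\textbf{Step 4: positive/negative parts.} For $T_m^{\pm}$ the same argument goes through because $x\mapsto x_+^2$ and $x\mapsto x_-^2$ are continuously differentiable with derivatives $2x_+$ and $-2x_-$. The analogue of Step~1 then produces the integrand factor $(F^{(-2)}(t)-G^{(-2)}(t))_{\pm}$ instead of $F^{(-2)}(t)-G^{(-2)}(t)$, and Steps 2--3 are unchanged; the Fubini swap picks out $v^+$, respectively $v^-$. The set $A_0^{(2)}$ depending on $m$ causes no difficulty because we never differentiate the indicator $1\{F^{(-2)}>G_m^{(-2)}\}$ directly: the $C^1$ composition with $x_+^2$ absorbs the kink. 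The main obstacle throughout is the uniform remainder bound in Step 2; once that is in place, everything else is a deterministic expansion plus Fubini.
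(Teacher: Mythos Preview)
Your proposal is correct and follows essentially the same route as the paper: the paper also expands the square as in your Step~1, swaps the order of integration (via integration by parts rather than your Fubini, which is the same identity here), and Taylor-expands $G^{-1}$ using the boundedness of $(G^{-1})''=-g'/g^3$. One simplification worth noting in your Step~2: you do not need Bahadur--Kiefer or Cs\"org\H{o}--R\'ev\'esz machinery for $R_m$, because $G_m^{-1}=G^{-1}\circ H_{m,2}^{-1}$ holds \emph{exactly} by construction, so your $R_m(t)$ is nothing more than the integrated second-order Taylor remainder and is bounded directly by $C\int_0^1(p-H_{m,2}^{-1}(p))^2\,dp=O_P(1/m)$ --- this elementary bound is precisely what the paper uses.
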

\begin{proof}
    We begin with $T_m$. Note that\footnote{Convergence here is uniform convergence of the integrated quantiles.}
    \begin{align*}
    T_m &= \sqrt{m} (S_m - S) \\
    &= \sqrt{m} \int_0^1 (F^{(-2)}(t) - G_m^{(-2)}(t))^2 - (F^{(-2)}(t) - G^{(-2)}(t))^2 dt\\
    &=  \sqrt{m} \int_0^1 \left[2F^{(-2)}(t) - G_m^{(-2)}(t) - G^{(-2)}(t)\right] (G^{(-2)}(t) - G_m^{(-2)}(t)) dt\\
    &\rightarrow 2\sqrt{m} \int_0^1 \left[F^{(-2)}(t)  - G^{(-2)}(t)\right] (G^{(-2)}(t) - G_m^{(-2)}(t)) dt\\
    &= 2\sqrt{m} \int_0^1 \left[F^{(-2)}(t)  - G^{(-2)}(t)\right] \left[\int_0^t G^{(-1)}(p) - G^{(-1)}(H_{m,1}^{-1}(p)))dp\right] dt
    \end{align*}


Let us do integration by parts:
\begin{align*}
    &2\sqrt{m} \int_0^1 \left[F^{(-2)}(t)  - G^{(-2)}(t)\right] \left[\int_0^t G^{(-1)}(p) - G^{(-1)}(H_{m,1}^{-1}(p)))dp\right] dt = \\
    &= 2\sqrt{m}\left[ \left( \int_0^1 F^{(-2)}(t)  - G^{(-2)}(t) dt\right) \left[\int_0^1 G^{(-1)}(t) - G^{(-1)}(H_{m,1}^{-1}(t)))dt\right]\right. \\&-\left. \int_0^1 \left( \int_0^t F^{(-2)}(p)  - G^{(-2)}(p) dp\right) \left[ G^{(-1)}(t) - G^{(-1)}(H_{m,1}^{-1}(t)))\right]dt \right]\\
    &=2\sqrt{m} \int_0^1 \left( \int_t^1 F^{(-2)}(p)  - G^{(-2)}(p) dp\right) \left[ G^{(-1)}(t) - G^{(-1)}(H_{m,1}^{-1}(t)))\right]dt\\
    &= 2\sqrt{m} \int_0^1 \left(\frac{d G^{-1}(t)}{dt}\right)\left( \int_t^1 F^{(-2)}(p)  - G^{(-2)}(p) dp\right) (t - H_{m,1}^{-1}(t) ){dt} \\&+ O\left(\sqrt{m} \int_0^1 \int_t^1 F^{(-2)}(p)  - G^{(-2)}(p) dp)(t - H_{m,1}^{-1}(t) )^2dt\right)\\
    &= 2\sqrt{m} \int_0^1 \left(\frac{d G^{-1}(t)}{dt}\right)\left( \int_t^1 F^{(-2)}(p)  - G^{(-2)}(p) dp\right) (t - H_{m,1}^{-1}(t) ){dt} + o_P(1).
\end{align*}
In the penultimate step we have used a first-order Taylor series on $G^{-1}(t)$ via the assumption that $\frac{d^2 G^{-1}(t)}{dt^2} = -\frac{g'(G^{-1}(t))}{g^3(G^{-1}(t))}$ is bounded almost everywhere, and in the final step we have noted that 
\begin{align*}
\sqrt{m} \int_0^1 \left(\int_t^1 F^{(-2)}(p)  - G^{(-2)}(p) dp\right)(t - H_{m,1}^{-1}(t) )^2dt &\leq  2\sqrt{m} \int_0^1  (t - H_{m,1}^{-1}(t) )^2dt\\
&= o_P(1),
\end{align*}
since the support of $F$ and $G$ lie in $[-M, M]$ and $\int_0^1  (t - H_{m,1}^{-1}(t) )^2dt= O_p(1/m)$.



We then have
\[
T_m = \alpha_{m,2}(v)+ o_P(1),
\]
where $\alpha_{m,2}(t) = \sqrt{m} (t - H_{m,1}^{-1}(t))$, and $\alpha_{m,2}(v) = \int_0^1 v(t) \alpha_{m,2}(t) dt$ where
\[
v(t) = 2  \left(\frac{d G^{-1}(t)}{dt}\right)\left( \int_t^1 F^{(-2)}(p)  - G^{(-2)}(p) dp\right).
\]
Similarly,
\[
T_m^+ = \alpha_{m,2}(v^+)+ o_P(1),\quad T_m^- = \alpha_{m,2}(v^-)+ o_P(1)
\]
where
\[
v^+(t) = 2  \left(\frac{d G^{-1}(t)}{dt}\right)\left( \int_t^1 (F^{(-2)}(p)  - G^{(-2)}(p))_+ dp\right),
\]
\[
v^-(t) = 2  \left(\frac{d G^{-1}(t)}{dt}\right)\left( \int_t^1 (F^{(-2)}(p)  - G^{(-2)}(p))_- dp\right).
\]
\end{proof}

\begin{corollary}\label{Cor:1}
    Assume that $G$ is supported on an interval in $[-M,M]$, and has pdf $g$ such that $\frac{g'(p)}{g^3(p)}$ is bounded almost everywhere on the support of $g$. Then as $m \rightarrow \infty$
    \[
    \sqrt{m}(\epsilon_{IQ}(F, G_m) - \epsilon_{IQ}(F, G)) \rightarrow_w \mathcal{N}(0,\sigma^2)
    \]
    and if additionally $n \rightarrow \infty$
    \[
    \sqrt{m}(\epsilon_{IQ}(F_n, G_m) - \epsilon_{IQ}(F_n, G)) \rightarrow_w \mathcal{N}(0,\sigma^2),
    \]
    if for $U \sim \mathrm{Unif}([0,1])$
    \[
    \sigma^2 = \frac{\mathrm{Var}(v^+(U))}{d^8_{IQ}(F,G)}
    \]
    is finite.
\end{corollary}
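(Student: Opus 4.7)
The proof follows the classical delta-method-plus-CLT strategy for ratio statistics, leveraging Theorem \ref{thm:1} for the asymptotic behavior of the numerator and denominator separately.

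\textbf{Step 1 (delta-method reduction).} Writing $\epsilon_{IQ}(F,G_m) = S_m^+/S_m$ and $\epsilon_{IQ}(F,G) = S^+/S$, the algebraic identity
\[
\epsilon_{IQ}(F,G_m) - \epsilon_{IQ}(F,G) = \frac{1}{S_m S}\bigl(S\,(S_m^+-S^+) - S^+\,(S_m-S)\bigr),
\]
together with the almost-sure limit $S_m \to S > 0$ (which follows from the uniform convergence of integrated quantiles proved in \cite{gushchin2017integrated} and used at the outset of the proof of Theorem \ref{thm:1}) and Slutsky's theorem, gives
\[
\sqrt{m}\bigl(\epsilon_{IQ}(F,G_m)-\epsilon_{IQ}(F,G)\bigr) = \frac{1}{S^2}\bigl(S\,T_m^+ - S^+\,T_m\bigr) + o_P(1).
\]
Since $S = S^+ + S^-$ and $T_m = T_m^+ + T_m^-$, this rearranges to $\frac{1}{S^2}(S^- T_m^+ - S^+ T_m^-) + o_P(1)$, a balanced combination of the two one-sided statistics.

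\textbf{Step 2 (linearization via Theorem \ref{thm:1} and CLT).} Substituting the asymptotic expansions $T_m^\pm = \alpha_{m,2}(v^\pm) + o_P(1)$ from Theorem \ref{thm:1}, the previous display becomes
\[
\sqrt{m}\bigl(\epsilon_{IQ}(F,G_m)-\epsilon_{IQ}(F,G)\bigr) = \frac{1}{S^2}\,\alpha_{m,2}(h) + o_P(1),
\]
for an explicit $h$ built from $v^+, v^-, S^+, S^-$. The functional $\alpha_{m,2}(h) = \int_0^1 h(t)\alpha_{m,2}(t)dt$ is a smooth linear functional of the uniform quantile process; combining the Bahadur representation $\alpha_{m,2}(t) = m^{-1/2}\sum_j(\mathbb{1}_{V_j\le t} - t) + o_P(1)$ with the Lindeberg CLT yields weak convergence to a centered Gaussian. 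Expanding and simplifying the asymptotic variance, using that $v^+$ and $v^-$ are driven by integrals over complementary regions where $F^{(-2)}-G^{(-2)}$ has fixed sign, and that $d_{IQ}^2(F,G) = S$, one arrives at the claimed variance $\sigma^2 = \mathrm{Var}(v^+(U))/d_{IQ}^8(F,G)$.

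\textbf{Step 3 (replacing $F$ by $F_n$).} For the second statement, observe that the functionals $S^\pm$ and the integrands $v^\pm$ depend continuously on $F$ through its integrated quantiles; the boundedness assumption on $f'_x/f_x^3$ ensures the required smoothness of $F^{-1}$. Donsker-type uniform control on $F_n^{(-2)} - F^{(-2)}$ (again via \cite{gushchin2017integrated}) then shows that substituting $F_n$ for $F$ in the linearization incurs only an extra $O_P(1/\sqrt{n})$ error, which is absorbed into $o_P(1)$ under the regime $n,m\to\infty$; the asymptotic distribution is therefore unchanged.

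\textbf{Main obstacle.} The delicate part is the variance bookkeeping in Step 2: identifying the influence function of the ratio $S^+/S$ and verifying that after combining $S^- v^+$ and $S^+ v^-$ inside the quantile-process functional, the cross-terms conspire through the support structure of $(F^{(-2)}-G^{(-2)})_\pm$ to leave exactly $\mathrm{Var}(v^+(U))$ as the numerator of $\sigma^2$. Everything else is routine applications of Slutsky, the Bahadur representation, and the delta method.
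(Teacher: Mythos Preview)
Your approach is essentially the paper's: delta-method linearization of the ratio via Slutsky, substitution of the expansions from Theorem~\ref{thm:1}, asymptotic normality of the resulting linear functional of the uniform quantile process, and passage from $F$ to $F_n$ by uniform convergence of integrated quantiles. Two remarks. First, by Definition~\ref{def:RatioSSD} the violation ratio is $\epsilon_{IQ}(F,G_m)=S_m^-/S_m$, not $S_m^+/S_m$ (its numerator integrates over $\{G_m^{(-2)}>F^{(-2)}\}$), so your $+$ and $-$ labels are swapped throughout Step~1. Second, the paper does not obtain the stated variance through any ``support structure'' cancellation between $v^+$ and $v^-$; that argument would in fact fail, since $v^\pm(t)$ are integrals from $t$ to $1$ and hence generically have overlapping support in $t$. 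Instead the paper collapses the ratio expansion directly: using the identity $T_m^- - T_m = -T_m^+$ together with Theorem~\ref{thm:1} and $S_m\to S$ yields the linearization $-\alpha_{m,2}(v^+)/S^2$ in one line, from which $\sigma^2=\mathrm{Var}(v^+(U))/d_{IQ}^8(F,G)$ is immediate.
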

\begin{proof}
    Note that by Theorem \ref{thm:1}
    \[
    \sqrt{m}(\epsilon_{IQ}(F, G_m) - \epsilon_{IQ}(F, G)) = \sqrt{m}\left(\frac{S_m^-}{S_m} -\frac{S^-}{S}\right) = \frac{\sqrt{m}}{S S_m}(T_m^- - T_m ) \rightarrow -\frac{\alpha_{m,2}(v^+)}{S^2}
    \]
    since $S_m \rightarrow S$ a.s. Recalling the definition of $\alpha_{m,2}$ yields asymptotic normality with zero mean as in \cite{del2018optimal}, and variance as calculated in the corollary statement.

    The case of $\sqrt{m}(\epsilon_{IQ}(F_n, G_m) - \epsilon_{IQ}(F_n, G))$ follows similarly since integrated quantiles weakly converge as $F_n \rightarrow F$.
\end{proof}

Continuing with the main proof, recalling \eqref{eq:decomp} and using Corollary \ref{Cor:1} along with the asymptotic independence of the two terms and the fact that $\frac{n}{n+m} \rightarrow \lambda$, we have
\begin{align}
\nonumber\sqrt{\frac{mn}{m+n}}&\left(\varepsilon_{IQ}(F_n,G_m) - \varepsilon_{IQ}(F,G)\right) \\&= \sqrt{(1-\lambda)n}(\varepsilon_{IQ}(F_n,G_m) - \varepsilon_{IQ}(F,G_m)) + \sqrt{\lambda n}(\varepsilon_{IQ}(F,G_m) - \varepsilon_{IQ}(F,G))\label{eq:dcmp}\\
&\rightarrow \mathcal{N}(0,\sigma_\lambda^2(F,G))\nonumber
\end{align}
where
\[
\sigma_\lambda^2(F,G) = \frac{1}{d^8_{IQ}(F,G)}\left[(1-\lambda)\mathrm{Var}(v_F(U)) + \lambda \mathrm{Var}(v_G(U)) \right].
\]
Here, we have defined
\[
v_G(t) = 2  \left(\frac{1}{g(G^{-1}(t))}\right)\left( \int_t^1 (F^{(-2)}(p)  - G^{(-2)}(p))_+ dp\right),
\]
and
\[
v_F(t) = 2  \left(\frac{1}{f(F^{-1}(t))}\right)\left( \int_t^1 (F^{(-2)}(p)  - G^{(-2)}(p))_- dp\right).
\]


\subsection{Relative Testing: Proof of Theorem \ref{thm:relative}}\label{app:relative}
Note that
\begin{align*}
    \Delta \varepsilon_{IQ}
^{i_1,i_2}(F)&= {\epsilon}_{IQ}^{i_1}(F) - {\epsilon}_{IQ}^{i_2}(F)\\
&= \frac{1}{k-1} \left[\sum_{j \neq i_1} \epsilon_{IQ}^{i_1j} - \sum_{j \neq i_2} \epsilon_{IQ}^{i_2j} \right]\\
&= \frac{1}{k-1} \left[2 \epsilon_{IQ}^{i_1 i_2} -1 + \sum_{j \neq i_1,i_2} (\epsilon_{IQ}^{i_1j} - \epsilon_{IQ}^{i_2 j}) \right].
\end{align*}
For compactness, let us introduce the differencing notation $\phi(\cdot)|_{F}^{F_n} = \phi(F_n) - \phi(F)$. We seek a CLT on 
\begin{align*}
\sqrt{n}(\Delta \widehat{\varepsilon_{IQ}
^{i_1,i_2}}(F_{n})- \Delta \varepsilon_{IQ}
^{i_1,i_2}(F) )
&= \frac{\sqrt{n}}{k-1}{\left.\left(2 \epsilon_{IQ}(\cdot, F_{i_2,n}) + \sum_{j \neq i_1,i_2} \epsilon_{IQ}(\cdot, F_{j,n})\right) \right|_{F_{i_1}}^{F_{i_1,n}}}\\
&+ \frac{\sqrt{n}}{k-1}{\left.\left(2 \epsilon_{IQ}(F_{i_1}, \cdot) - \sum_{j \neq i_1,i_2} \epsilon_{IQ}(\cdot, F_{j,n})\right) \right|_{F_{i_2}}^{F_{i_2,n}}}\\
&+ \frac{\sqrt{n}}{k-1}{\sum_{j \neq i_1,i_2} \left.\left(\epsilon_{IQ}(F_{i_1}, \cdot) - \epsilon_{IQ}(F_{i_2}, \cdot)\right)\right|_{F_{j}}^{F_{j,n}}}\\
&\hspace{-1.5in}\rightarrow_w \underbrace{\frac{\sqrt{n}}{k-1}\left.\left(2 \epsilon_{IQ}(\cdot, F_{i_2}) + \sum_{j \neq i_1,i_2} \epsilon_{IQ}(\cdot, F_{j})\right) \right|_{F_{i_1}}^{F_{i_1,n}}}_{I}+ \underbrace{\frac{\sqrt{n}}{k-1}\left.\left(2 \epsilon_{IQ}(F_{i_1}, \cdot) - \sum_{j \neq i_1,i_2} \epsilon_{IQ}(\cdot, F_{j})\right) \right|_{F_{i_2}}^{F_{i_2,n}}}_{II}\\
&+ \underbrace{\frac{\sqrt{n}}{k-1}\sum_{j \neq i_1,i_2} \left.\left(\epsilon_{IQ}(F_{i_1}, \cdot) - \epsilon_{IQ}(F_{i_2}, \cdot)\right)\right|_{F_{j}}^{F_{j,n}}}_{III}
\end{align*}
where we have used the uniform convergence of integrated quantiles. Note that $I$, $II$, and each term in the sum in $III$ are all independent. 

Define
\[
v^{(1)}_{ij}(t) = 2  \left(\frac{d F_i^{-1}(t)}{dt}\right)\left( \int_t^1 F_i^{(-2)}(p)  - F_j^{(-2)}(p) dp\right),
\]
\[
v^{(2)}_{ij}(t) = 2  \left(\frac{d F_j^{-1}(t)}{dt}\right)\left( \int_t^1 F_i^{(-2)}(p)  - F_j^{(-2)}(p) dp\right),
\]
and $v^{(1)+}_{ij}$, $v^{(2)+}_{ij}$ similarly.
Then by the proof of Corollary \ref{Cor:1}, each term in $III$ converges to
\begin{align*}
\frac{\sqrt{n}}{k-1}\left.\left(\epsilon_{IQ}(F_{i_1}, \cdot) - \epsilon_{IQ}(F_{i_2}, \cdot)\right)\right|_{F_{j}}^{F_{j,n}} &\rightarrow -\frac{\alpha_{m,j}(v_{i_1j}^{(2)+})}{(k-1)d_{IQ}^4(F_{i_1}, F_j)} + \frac{\alpha_{m,j}(v_{i_2j}^{(2)+})}{(k-1)d_{IQ}^4(F_{i_2}, F_j)}\\
&= \frac{1}{k-1}\alpha_{m,j}\left(-\frac{v^{(2)+}_{i_1 j}}{d_{IQ}^4(F_{i_1}, F_j)}+\frac{v^{(2)+}_{i_2 j}}{d_{IQ}^4(F_{i_2}, F_j)}\right)\\
&\rightarrow_w \mathcal{N}\left(0,\frac{1}{(k-1)^2}\sigma_{j}^2(i_1,i_2)\right),\quad \forall j \neq i_1, i_2.
\end{align*}
where
\[
\sigma_j^2(i_1,i_2) = \frac{1}{(k-1)^2}\mathrm{Var}\left(\frac{v^{(2)+}_{i_1 j}(U)}{d_{IQ}^4(F_{i_1}, F_j)}-\frac{v^{(2)+}_{i_2 j}(U)}{d_{IQ}^4(F_{i_2}, F_j)}\right),\quad \forall j \neq i_1, i_2,
\]
and $U\sim \mathrm{Unif}([0,1])$. Similarly for $I$ and $II$,
\begin{align*}
    I &\rightarrow_w \mathcal{N}\left(0,\frac{1}{(k-1)^2}\sigma_{i_1}^2(i_1,i_2)\right)\\
    II &\rightarrow_w \mathcal{N}\left(0,\frac{1}{(k-1)^2}\sigma_{i_2}^2(i_1,i_2)\right)\\
\end{align*}
where\footnote{This $U\sim \mathrm{Unif}([0,1])$ is drawn simply for this variance calculation and is not dependent on anything outside of this equation.}
\[
\sigma_{i_1}^2(i_1,i_2) = \mathrm{Var}\left( \frac{2 v^{(1)-}_{i_1 i_2}(U)}{d^4_{IQ}(F_{i_1}, F_{i_2})} + \sum_{j \neq i_1, i_2} \frac{v^{(1)-}_{i_1 j}(U)}{d^4_{IQ}(F_{i_1}, F_{j})} \right),
\]
\[
\sigma_{i_2}^2(i_1,i_2) = \mathrm{Var}\left( \frac{2 v^{(2)+}_{i_1 i_2}(U)}{d^4_{IQ}(F_{i_1}, F_{i_2})} - \sum_{j \neq i_1, i_2} \frac{v^{(1)-}_{i_2 j}(U)}{d^4_{IQ}(F_{i_2}, F_{j})} \right).
\]
Putting everything together via independence,
\[
\sqrt{n}\left(\Delta \widehat{\varepsilon_{IQ}
^{i_1,i_2}}(F_{n})- \Delta \varepsilon_{IQ}
^{i_1,i_2}(F) \right) \rightarrow_w \mathcal{N}\left(0,\frac{1}{(k-1)^2}\sum_{i=1}^k \sigma_i^2(i_1,i_2)\right). 
\]




\section{Consistency of Bootstrapping}
\label{app:boot}
In this section, we consider the relaxation measure using the CDFs\footnote{The result using quantiles as described in the main text is less straightforward and if left for future work.}:
\[
\tilde{\epsilon}_{\ell}(F_X, F_Y) = \frac{\int_{-\infty}^{\infty} (F^{(\ell)}_Y (t) - F^{(\ell)}_X (t))_+^2 dt}{\int_{-\infty}^{\infty} (F^{(\ell)}_Y (t) - F^{(\ell)}_X (t))^2 dt}.
\]
Note that we can relax FSD as follows:
\begin{equation}
Y  \underset{\varepsilon- \text{FSD}}{\succcurlyeq} X \iff \tilde{\epsilon}_{1}(F_{X},F_{Y}) \leq \varepsilon.
\label{eq:epsFSDCDF}
\end{equation}
Similarly we can relax SSD as follows:
\begin{equation}
Y  \underset{\varepsilon- \text{SSD}}{\succcurlyeq} X \iff \tilde{\epsilon}_{2}(F_{X},F_{Y}) \leq \varepsilon.
\label{eq:epsSSDCDF}
\end{equation}

We will prove bootstrap consistency for $\ell = 1$ (approximate first order dominance), the proof for $\ell = 2$ (approximate second order dominance) is similar.

We seek to show that the bootstrapped variance $\mathsf{Var}(\tilde{\epsilon}_{1}(F^{n\ast}_X, F^{m\ast}_Y))$ is an asymptotically consistent estimator of $\mathsf{Var}(\tilde{\epsilon}_{1}(F^{n}_X, F^{m}_Y)$, i.e. their ratio goes to 1:
\[
\frac{\mathsf{Var}(\tilde{\epsilon}_{1}(F^{n\ast}_X, F^{m\ast}_Y))}{\mathsf{Var}(\tilde{\epsilon}_{1}(F^{n}_X, F^{m}_Y))} \rightarrow_p 1.
\]

Note we can write this as
\[
\frac{\mathsf{Var}(\tilde{\epsilon}_{1}(F^{n\ast}_X, F^{m\ast}_Y)}{\mathsf{Var}(\tilde{\epsilon}_{1}(F^{n}_X, F^{m}_Y))}  \rightarrow_{p} \frac{\mathsf{Var}(T(F^{n\ast}_X, F^{m\ast}_Y))}{\mathsf{Var}(T(F^n_X, F^m_Y))},
\]
where
\[
T(F_X, F_Y) = \frac{\int_{-\infty}^{\infty} (F_Y (t) - F_X (t))_+^2 dt}{\int_{-\infty}^{\infty} (F_Y (t) - F_X (t))^2 dt}.
\]

Consider the metric created by the sup norm
\[
\rho_\infty(F,G) = \|F - G\|_\infty =  \sup_x |F(x) - G(x)|.
\]
Note that $T$ is continuously $\rho_\infty$-Frechet differentiable in both arguments due to the differentiability of the function $(\cdot)_+^2$ and integration. Specifically,
\begin{align*}
D_{1,(F_X, F_Y)}(G_X)  &=\frac{1}{(\int_{-\infty}^{\infty} (F_Y (t) - F_X (t))^2 dt)^2}\cdot\\&\Big[ \left(\int_{-\infty}^{\infty} (F_Y (t) - F_X (t))^2 dt\right) \left(\int_{-\infty}^\infty 2 (F_Y (t) - F_X (t))_+ G_X dt\right)\\& - \left(\int_{-\infty}^{\infty} (F_Y (t) - F_X (t))_+^2 dt\right)\left(\int_{-\infty}^\infty 2 (F_Y (t) - F_X (t)) G_X dt\right)\Big].
\end{align*}
and similarly for $D_{2,(F_X, F_Y)}(G_Y)$.
Since $T$ is continuously differentiable, by the definition of continuous Frechet differentiability we can write (see Chapter 2 in \cite{shao2012jackknife}) the following:
\begin{align*}
    &T(F^{n\ast}_X, F^{m\ast}_Y) - T(F^{n}_X, F^{m}_Y) \\&= D_{1,(F^n_X, F^m_Y)}(F_X^{n\ast} - F^n_X) + D_{2,(F^n_X, F^m_Y)}(F_Y^{m\ast} - F^m_Y) + (\rho_\infty(F_X^{n\ast}, F^n_X) + \rho_\infty (F_Y^{m\ast}, F^m_Y))\epsilon^\ast_{n,m}
\end{align*}
\begin{align*}
    &T(F^{n\ast}_X, F^{m}_Y) - T(F^{n}_X, F^{m}_Y) = D_{1,(F^n_X, F^m_Y)}(F_X^{n\ast} - F^n_X)  + (\rho_\infty(F_X^{n\ast}, F^n_X))\epsilon^\ast_{n}
\end{align*}
\begin{align*}
    &T(F^{n}_X, F^{m\ast}_Y) - T(F^{n}_X, F^{m}_Y) =  D_{2,(F^n_X, F^m_Y)}(F_Y^{m\ast} - F^m_Y) + ( \rho_\infty (F_Y^{m\ast}, F^m_Y))\epsilon^\ast_{m}
\end{align*}
and
\begin{align*}
    &T(F^{n}_X, F^{m}_Y) - T(F_X, F_Y) \\&= D_{1,(F_X, F_Y)}(F_X^{n} - F_X) + D_{2,(F_X, F_Y)}(F_Y^{m} - F_Y) + (\rho_\infty(F_X^{n}, F_X) + \rho_\infty (F_Y^{m}, F_Y))\epsilon_{n,m}
\end{align*}
\begin{align*}
    &T(F^{n}_X, F_Y) - T(F_X, F_Y) = D_{1,(F_X, F_Y)}(F_X^{n} - F_X)  + (\rho_\infty(F_X^{n}, F_X))\epsilon_{n}
\end{align*}
\begin{align*}
    &T(F_X, F^{m}_Y) - T(F_X, F_Y) =  D_{2,(F_X, F_Y)}(F_Y^{m} - F_Y) + ( \rho_\infty (F_Y^{m}, F_Y))\epsilon_{m}
\end{align*}
where $\epsilon^\ast_{n,m}, \epsilon^\ast_{n}, \epsilon^\ast_{m} ,\epsilon_{n,m}, \epsilon_{n}, \epsilon_{m}\rightarrow 0$ as $n,m \rightarrow \infty$.

Hence, combining terms,
\[
T(F^{n\ast}_X, F^{m\ast}_Y) - T(F^{n}_X, F^{m}_Y) = (T(F^{n\ast}_X, F^{m}_Y) - T(F^{n}_X, F^{m}_Y)) + (T(F^{n}_X, F^{m\ast}_Y) - T(F^{n}_X, F^{m}_Y)) + o_p(n^{-1/2} + m^{-1/2}), 
\]
and
\[
T(F^{n}_X, F^{m}_Y) - T(F_X, F_Y) = (T(F^{n}_X, F_Y) - T(F_X, F_Y)) + (T(F_X, F^{m}_Y) - T(F_X, F_Y)) + o_p(n^{-1/2} + m^{-1/2}).
\]


Hence, assuming independence of the $n$-sample and $m$-sample and respective bootstrap resamplings,
\[
\frac{\mathsf{Var}(T(F^{n\ast}_X, F^{m\ast}_Y))}{\mathsf{Var}(T(F^{n}_X, F^{m}_Y))}\rightarrow_{a.s.} \frac{\mathsf{Var}(T(F^{n}_X, F^{m\ast}_Y)) + \mathsf{Var}(T(F^{n\ast}_X, F^{m}_Y))}{\mathsf{Var}(T(F_X, F^{m}_Y)) + \mathsf{Var}(T(F^{n}_X, F_Y))},
\]
i.e. we add the variances. 

We have now divided the task to the one-sided setting where the bootstrap is only done in one argument of $T$. Hence we can apply Theorem 3.10 of \cite{shao2012jackknife} which states that for $\rho_\infty$-Frechet differentiable functions of a CDF, the bootstrap variance estimator is asymptotically consistent if the support is bounded (more general results can be stated but are omitted for simplicity). Applying separately to each of the two variances we have the following.
\begin{proposition}
    Suppose $F_X$, $F_Y$, have support contained in $[-M, M]$ for some $M > 0$, and $F_X^n$, $F_Y^m$ arise from independent samples. Then 
    \[
\frac{\mathsf{Var}(\tilde{\epsilon}_{1}(F^{n\ast}_X, F^{m\ast}_Y))}{\mathsf{Var}(\tilde{\epsilon}_{1}(F^{n}_X, F^{m}_Y))} \rightarrow_{a.s.} 1.
\]
\end{proposition}

\section{Additional Experimental Results}\label{app:AddExp}

\subsection{Statistical Significance on Synthetic Data }\label{app:stsGaussian}

We examine the statistical properties of our tests as a function of sample size.
We purposely design synthetic score distributions to represent challenging problems with large overlap between the distributions and a considerable violation ratio, but where one would still like to have an ordering among the variables.
For this we consider the two Gaussian distributions with mean $\mu=0$ and standard deviation $\sigma=1$, and with mean $\mu=0.5$ and standard deviation $\sigma=2$, respectively.
In the top panels of Figure \ref{fig:SyntTest} we show the PDF, CDF and integrated quantile function of these two Gaussians, illustrating the large violation ratio. The orange distribution can be calculated to be 0.2-FSD and 0.45-SSD over the blue distribution. Note that these $\varepsilon$ values are not comparable, due to the differences in definitions. In Figure \ref{fig:SyntTest}, we conduct experiments illustrating the power of our tests for the absolute tests of the hypotheses $H_{0,FSD} = 0.45$-FSD and  $H_{0,SSD} = 0.45$-SSD. We also use our relative tests, which in this 2-variable case (as noted in the main text) are equivalent to testing $H_{0,FSD} = 0.5$-FSD and  $H_{0,SSD} = 0.5$-SSD.
The bottom left panel in Figure \ref{fig:SyntTest} show the True Positive Rate for the different types of tests that we developed: relative test with quantile function, relative test with Integrated Quantile Function, absolute test with quantile function, and absolute test with Integrated Quantile Function.
As expected, all tests quickly converge towards True Positive Rate of 1.0 for growing sample sizes.  

\begin{figure}[ht!]
\centering
\includegraphics[width=1.0\textwidth]{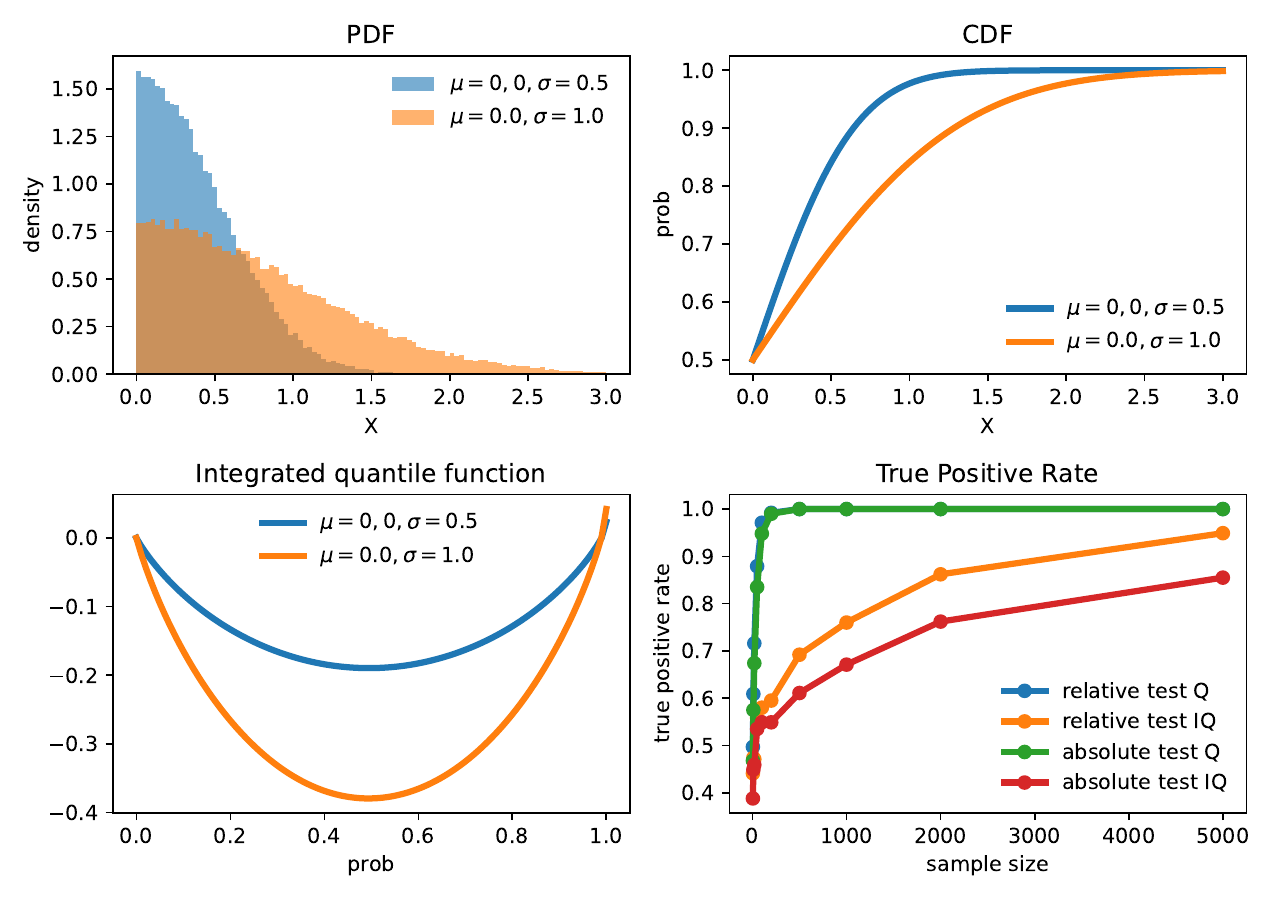} 
\caption{True Positive Rate vs sample size for \textbf{Gaussian distributions}. We compute the True Positive Rate of our stochastic dominance methods on the test distributions in the top panels for different sample sizes. Decisions are made using a confidence threshold of $\alpha=0.05$ and $\tau=0.45$ (for the absolute tests) and rates are computed over 1000 repetitions of the tests. Note that the FSD and SSD curves should not be compared due to differences in the underlying hypotheses.}
\label{fig:SyntTest}
\end{figure}

\begin{figure}[ht!]
\centering
\includegraphics[width=1.0\textwidth]{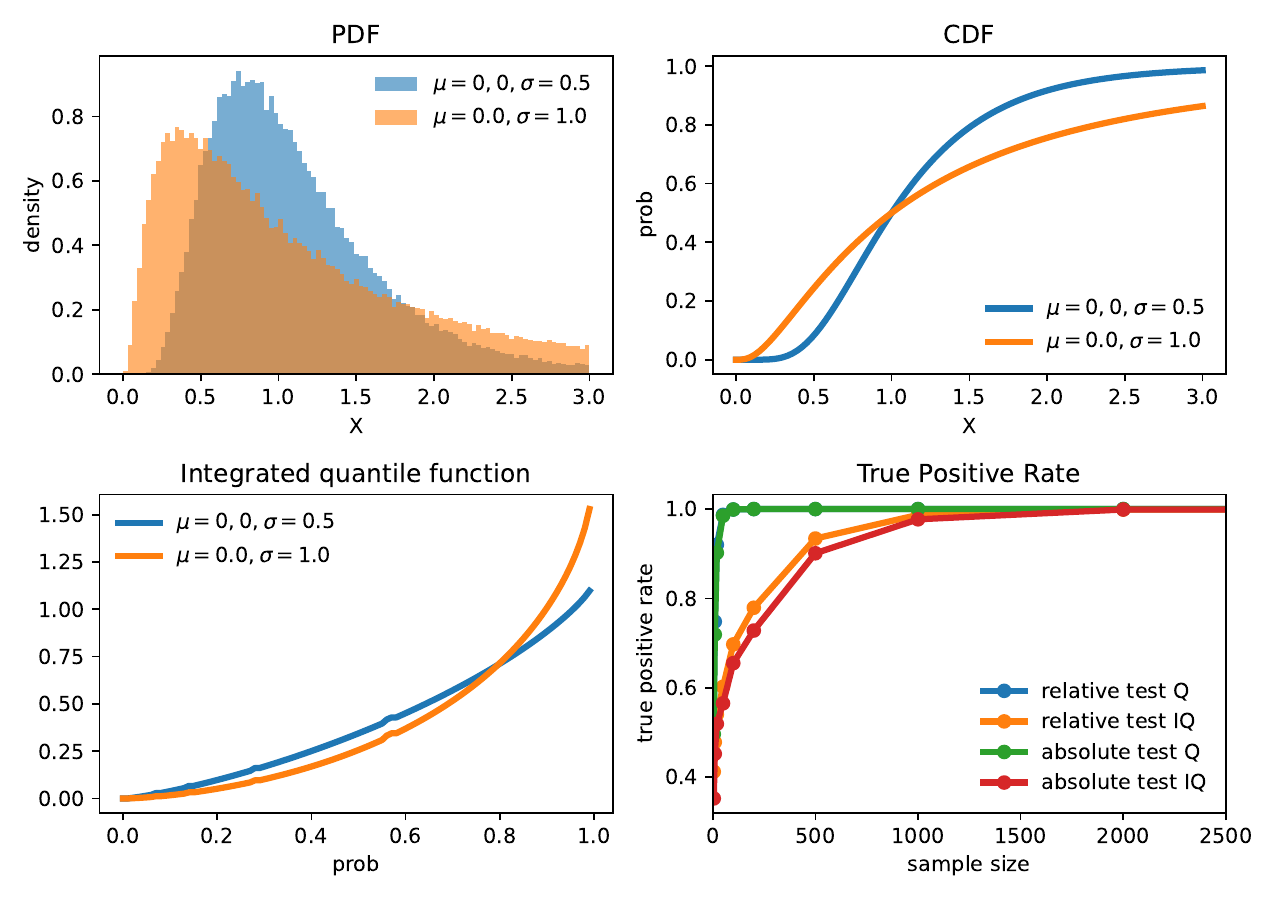}
\caption{True Positive Rate vs sample size for \textbf{Lognormal distributions} generated as $X=e^{\mu+\sigma Z}$, where $Z$ is a standard Gaussian variable. We compute the True Positive Rate of our stochastic dominance as in Fig.\ \ref{fig:SyntTest}, but in this case we examine True Positive Rate for heavy-tailed distributions examplified by Lognormal distributions.}
\label{fig:SyntTest2}
\end{figure}

\subsection{Mix-Instruct }
Results for the Mix-Instruct data are shown in Figures \ref{fig:Mixinstruct} and \ref{fig:MixinstructCDFTVAR_portfolio}, as well as Table \ref{tab:mixinstapp}.

\begin{table}[ht!]
\centering

\resizebox{\textwidth}{!}{\begin{tabular}{lcccccccccccc}
\toprule
&  Open & koala & alpaca & llama & flan-t5 &  stablelm & Vicuna & Dolly &  Moss & ChatGLM& mpt-7b & mpt-7b\\
 & assistant &  &  & 7b & &  &  & (v2)  & 6b &  &instruct\\
 \midrule
   &  & &  &  & &  & &  & &  &  & \\ 
\textbf{Mean Win Rates}   &  & &  &  & &  & &  & &  &  & \\ 
 RA(MWR @ M)    & \rankone{1} & 6 & \ranktwo{2} & 8 & 5 & 7 & \rankthree{3} & 10 &  9 & 4 & 11 & 12 \\ 
 
  &  & &  &  & &  & &  & &  &  & \\ 
MWR @P(IC)     &  \rankone{1} & 5 & \ranktwo{2} & 7 & 6 & 8 & \rankthree{3} & 9 & 10 & 4 & 11 & 12 \\ 
   &  & &  &  & &  & &  & &  &  & \\ 

\Xhline{5\arrayrulewidth}
    &  & &  &  & &  & &  & &  &  & \\ 
\textbf{Relative FSD}      &  & &  &  & &  & &  & &  &  & \\ 
RA(R-FSD @ M)    & \rankone{1} & 6 & \ranktwo{2} & 5 & 8 & 11 & 4 & 10 & 7 & \rankthree{3} & 9 & 12\\ 
    &  & &  &  & &  & &  & &  &  & \\ 
R-FSD @P(IC)   &   \rankone{1}& 6 & \ranktwo{2} & 5 & 11 & 10 & 4 & 8 & 7 & \rankthree{3} & 9 & 12\\
    &  & &  &  & &  & &  & &  &  & \\ 
R-FSD @ChatGPT   &   \rankone{1}& 7 & \rankthree{3} & 4 & 12 & 11 & \ranktwo{2} & 8 & 5 & 6 & 9 & 10\\
    &  & &  &  & &  & &  & &  &  & \\ 
\Xhline{5\arrayrulewidth}
   &  & &  &  & &  & &  & &  &  & \\ 
\textbf{Relative SSD}       &  & &  &  & &  & &  & &  &  & \\ 
    RA(R-SSD @ M) &  \rankone{1}  &  7 & \ranktwo{2} & 5 & 12 & 10 & 4 & 9 & 6 & \rankthree{3} & 8  &  11\\
   &  & &  &  & &  & &  & &  &  & \\ 
R-SSD @P(IC)     & \rankone{1} & 6 & \rankthree{3} & 5 & 12 & 11  & 4 & 7  & 8 & \ranktwo{2} & 9 & 10 \\
    &  & &  &  & &  & &  & &  &  & \\
 R-SSD @ChatGPT   &   \rankone{1}& 8 & \rankthree{3} & 4 & 11 & 12 & \ranktwo{2} & 7 & 5 & 6 & 9 & 10\\
&  & &  &  & &  & &  & &  &  & \\ 

\Xhline{5\arrayrulewidth}

 &  & &  &  & &  & &  & &  &  & \\ 
 \textbf{Mean-Risk Models}      &  & &  &  & &  & &  & &  &  & \\ 
 RA($\mu_X-\Gamma_{X}$) @ M      & \rankone{1}  & 7 & \ranktwo{2} & 5 & 12 & 11 & 4 & 9 & 6 & \rankthree{3} & 8 & 10\\ 
RA($\mu_X-r_{X}$) @P(IC)     & \rankone{1} & 6 & \rankthree{3} & 5 & 12 & 11 & 4 & 7 & 8 & \ranktwo{2} & 9 & 10\\ 
    &  & &  &  & &  & &  & &  &  & \\ 
\Xhline{5\arrayrulewidth}

\end{tabular}}
\caption{Rankings of models on following instructions according to all tests, with the top 3 ranks highlighted. We see that SSD and Mean -- Risk models are consistent. Note that RA($\mu_X-r_{X}$) @P(IC) denotes the aggregation of rankings produced by ($\mu_X-r_{X}$) @P(IC) for each $r_X$ in Table \ref{tab:RiskC}.}
\label{tab:tabMixInstructMain}
\vskip -0.15 in
\end{table}

\begin{figure}[t!]
\centering
\includegraphics[scale=0.52]{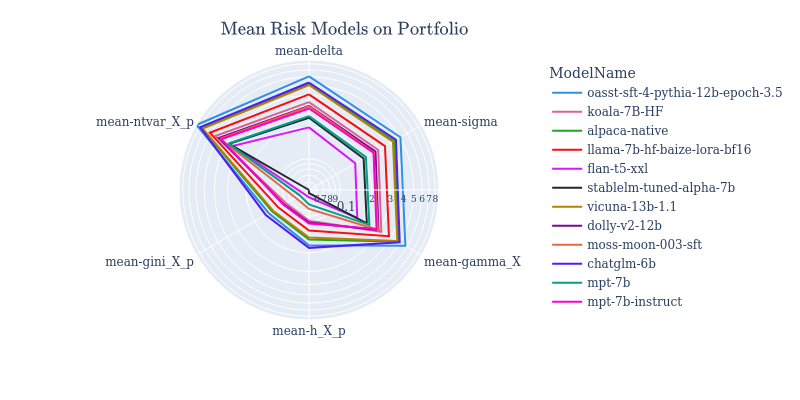} 
\caption{Radar plot of mean -- risk models of the portfolio on Mix-Instruct data. Note that the outer models are indeed the ones preferred by SSD in Table \ref{tab:mixinstapp}.}
\label{fig:Mixinstruct}
\end{figure}

\begin{figure}[ht!]
\centering
\includegraphics[scale=0.42]{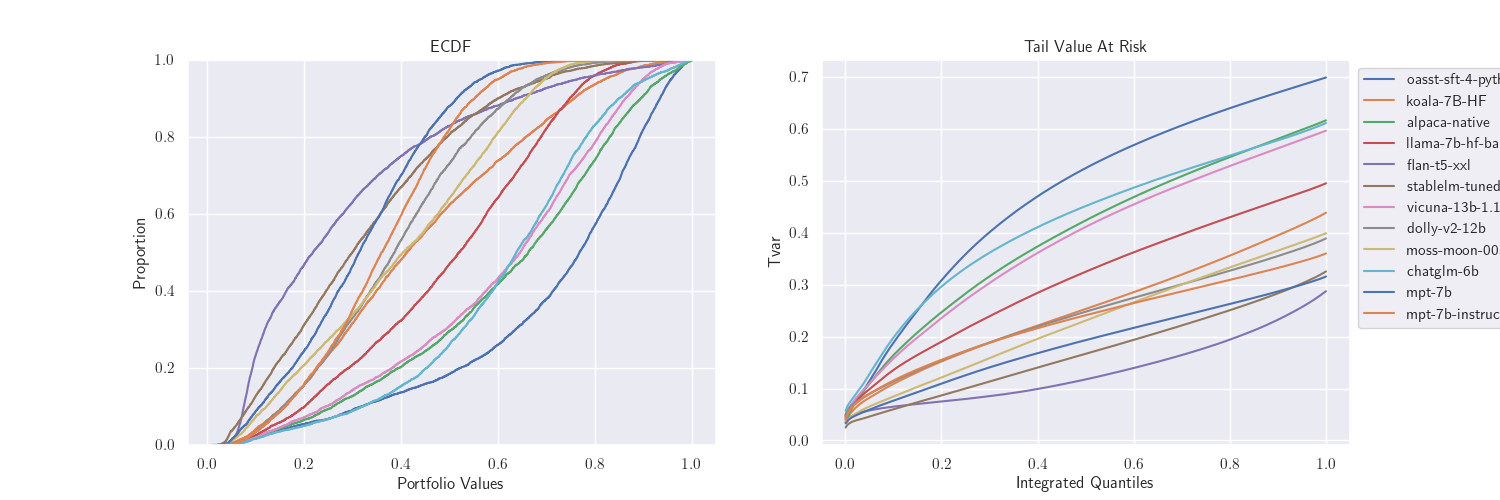} 
\caption{ Empirical CDF and TvaR for portfolio on Mix-Instruct data }
\label{fig:MixinstructCDFTVAR_portfolio}
\end{figure}

\begin{table}[ht!]
\centering

\resizebox{\textwidth}{!}{\begin{tabular}{lcccccccccccc}
\toprule
&  Open & koala & alpaca & llama & flan-t5 &  stablelm & Vicuna & Dolly &  Moss & ChatGLM& mpt-7b & mpt-7b\\
 & assistant &  &  & 7b & &  &  & (v2)  & 6b &  &instruct\\
 \midrule
   &  & &  &  & &  & &  & &  &  & \\ 
\textbf{Mean Win Rates}   &  & &  &  & &  & &  & &  &  & \\ 
  &  & &  &  & &  & &  & &  &  & \\ 
 RA(MWR @ M)    & \rankone{1} & 6 & \ranktwo{2} & 8 & 5 & 7 & \rankthree{3} & 10 &  9 & 4 & 11 & 12 \\ 
 
  &  & &  &  & &  & &  & &  &  & \\ 
MWR @P(IC)     &  \rankone{1} & 5 & \ranktwo{2} & 7 & 6 & 8 & \rankthree{3} & 9 & 10 & 4 & 11 & 12 \\ 
    &  & &  &  & &  & &  & &  &  & \\ 
   &  & &  &  & &  & &  & &  &  & \\ 
\Xhline{5\arrayrulewidth}
    &  & &  &  & &  & &  & &  &  & \\ 
\textbf{Relative FSD}      &  & &  &  & &  & &  & &  &  & \\ 
  &  & &  &  & &  & &  & &  &  & \\ 
RA(R-FSD @ M)    & \rankone{1} & 6 & \ranktwo{2} & 5 & 8 & 11 & 4 & 10 & 7 & \rankthree{3} & 9 & 12\\ 
    &  & &  &  & &  & &  & &  &  & \\ 
R-FSD @P(IC)   &   \rankone{1}& 6 & \ranktwo{2} & 5 & 11 & 10 & 4 & 8 & 7 & \rankthree{3} & 9 & 12\\
    &  & &  &  & &  & &  & &  &  & \\

    &  & &  &  & &  & &  & &  &  & \\ 

\Xhline{5\arrayrulewidth}
   &  & &  &  & &  & &  & &  &  & \\ 
\textbf{Relative SSD}       &  & &  &  & &  & &  & &  &  & \\ 
    &  & &  &  & &  & &  & &  &  & \\ 
    RA(R-SSD @ M) &  \rankone{1}  &  7 & \ranktwo{2} & 5 & 12 & 10 & 4 & 9 & 6 & \rankthree{3} & 8  &  11\\
   &  & &  &  & &  & &  & &  &  & \\ 
R-SSD @P(IC)     & \rankone{1} & 6 & \rankthree{3} & 5 & 12 & 11  & 4 & 7  & 8 & \ranktwo{2} & 9 & 10 \\
    &  & &  &  & &  & &  & &  &  & \\ 

R-SSD @ChatGPT   &   \rankone{1}& 8 & \rankthree{3} & 4 & 11 & 12 & \ranktwo{2} & 7 & 5 & 6 & 9 & 10\\
&  & &  &  & &  & &  & &  &  & \\ 
\Xhline{5\arrayrulewidth}

&  & &  &  & &  & &  & &  &  & \\ 

\textbf{Absolute FSD}       &  & &  &  & &  & &  & &  &  & \\ 
    &  & &  &  & &  & &  & &  &  & \\ 
$\varepsilon$-FSD @P(IC) $\varepsilon$=0.08    & \rankone{1} & 6 & \ranktwo{2} & 5 & 10 & 11  & 4 & 7  & 8 & \rankthree{3} & 9 & 12 \\
    &  & &  &  & &  & &  & &  &  & \\ 
$\varepsilon$-FSD @P(IC) $\varepsilon$=0.25 &  \rankone{1}  &  6 & \ranktwo{2} & 5 & 12 & 10 & 4 & 7 & 8 & \rankthree{3} & 9  &  11\\
   &  & &  &  & &  & &  & &  &  & \\ 
$\varepsilon$-FSD @P(IC) $\varepsilon$=0.4  &   \rankone{1}& 6 & \ranktwo{2} & 5 & 12 & 10 & 4 & 8 & 7 & \rankthree{3} & 9 & 11\\
&  & &  &  & &  & &  & &  &  & \\ 
\Xhline{5\arrayrulewidth}
&  & &  &  & &  & &  & &  &  & \\ 
\textbf{Absolute SSD}       &  & &  &  & &  & &  & &  &  & \\ 
    &  & &  &  & &  & &  & &  &  & \\ 
$\varepsilon$-SSD @P(IC) $\varepsilon = 0.08$     & \rankone{1} & 6 & \rankthree{3} & 5 & 12 & 11  & 4 & 7  & 8 & \ranktwo{2} & 9 & 10 \\
    &  & &  &  & &  & &  & &  &  & \\ 
$\varepsilon$-SSD @P(IC) $\varepsilon=0.25$  &  \rankone{1}  &  6 & \rankthree{3} & 5 & 12 & 11 & 4 & 8 & 7 & \ranktwo{2} & 9  & 10\\
   &  & &  &  & &  & &  & &  &  & \\ 
$\varepsilon$-SSD @P(IC) $\varepsilon$=0.4    &   \rankone{1}& 6 & \rankthree{3} & 5 & 12 & 11 & 4 & 7 & 8 & \ranktwo{2} & 9 & 10\\
&  & &  &  & &  & &  & &  &  & \\ 
\Xhline{5\arrayrulewidth}

   &  & &  &  & &  & &  & &  &  & \\ 
 \textbf{Mean-Risk Models}      &  & &  &  & &  & &  & &  &  & \\ 
    &  & &  &  & &  & &  & &  &  & \\ 

RA($\mu_X-r_{X}$) @P(IC)     & \rankone{1} & 6 & \rankthree{3} & 5 & 12 & 11 & 4 & 7 & 8 & \ranktwo{2} & 9 & 10\\ 
    &  & &  &  & &  & &  & &  &  & \\ 

    &  & &  &  & &  & &  & &  &  & \\


\Xhline{5\arrayrulewidth}


\Xhline{5\arrayrulewidth}
\end{tabular}}
\caption{ Mix instruct Extended Results.}
\label{tab:mixinstapp}
\end{table}

\subsection{Toxicity}
Toxicity results are in Table \ref{tab:toxicity}.

\begin{table}[ht!]
\centering
\resizebox{\textwidth}{!}{\begin{tabular}{lccccc}

Scenario &  Llama 2 7b & Llama 2 13b & Llama 2 70b & MosaicML MPT 30b & Tiiuae Falcon 40b \\
 &  &  &  &  &  \\
\toprule
&  &  &  &  &  \\
\textbf{Toxic Prompts}  &  &  &  &  &  \\
&  &  &  &  &  \\
RA(R-FSD @M ) (Gen Only)  & \rankthree{3} & \ranktwo{2} & 4 &\rankone{1} & 5 \\
R-FSD @P(IC)(IC)(Gen Only) & \ranktwo{2} & \rankthree{3} & 4 & \rankone{1} & 5 \\
RA(R-SSD @M ) (Gen Only)  & \rankthree{3} & \ranktwo{2} & 4 & \rankone{1} & 5\\
R-SSD@P(IC)(IC) (Gen Only) & \rankthree{3} & \ranktwo{2} & 4 & \rankone{1} & 5 \\
 &  &  &  &  &  \\
\hline
 &  &  &  &  &  \\
RA(R-FSD @M)  (Prompt + Gen) & \ranktwo{2} & \rankthree{3} & \rankone{1} & 4 & 5 \\
R-FSD @P(IC)(IC)(Prompt + Gen) & \ranktwo{2} & \rankthree{3} & \rankone{1} & 4 & 5 \\
RA(R-SSD @M)  (Prompt + Gen) & \ranktwo{2} & \rankthree{3} & \rankone{1} & 4 & 5\\
R-SSD @P(IC)(IC) (Prompt + Gen)  & \ranktwo{2} & \rankthree{3} & \rankone{1} & 4 & 5 \\
&  &  &  &  &  \\
\Xhline{5\arrayrulewidth}
&  &  &  &  &  \\
\textbf{Non-Toxic Prompts}  &  &  &  &  &  \\
&  &  &  &  &  \\
RA(R-FSD @M) (Gen Only) & \rankone{1} & \ranktwo{2} & 4 & \rankthree{3} & 5 \\
R-FSD @P(IC)(IC) (Gen Only) & \rankone{1} & \ranktwo{2} & \rankthree{3} & 4 & 5 \\
RA(R-SSD @M) (Gen Only) & \rankone{1} & \ranktwo{2} & \rankthree{3} & 4 & 5 \\
R-SSD @P(IC)(IC) (Gen Only) & \rankone{1} & \ranktwo{2} & \rankthree{3} & 4 & 5 \\
 &  &  &  &  &  \\
\hline
 &  &  &  &  &  \\

RA( R-FSD @M) (Prompt + Gen) & \rankthree{3} & \ranktwo{2} & 4 & \rankone{1} & 5 \\
 R-FSD @P(IC)  (Prompt + Gen)& \rankone{1} & \ranktwo{2} & 4 & \rankthree{3} & 5 \\
RA(R-SSD @M) (Prompt + Gen) & \rankone{1} & \ranktwo{2} & \rankthree{3} & 4 & 5\\
 R-SSD @P(IC) (Prompt + Gen) & \rankone{1} & \ranktwo{2} & 4 & \rankthree{3} & 5 \\
&  &  &  &  &  \\
\Xhline{5\arrayrulewidth}
&  &  &  &  &  \\
\textbf{All Combined (Toxic + Non-Toxic Prompts) } &  &  &  &  &  \\
&  &  &  &  &  \\
RA(R-FSD @M) (Gen Only) & \ranktwo{2} & \rankthree{3} & 5 & \rankone{1} & 4 \\
R-FSD @P(IC) (Gen Only)& \ranktwo{2} & \rankthree{3} & 5 & \rankone{1} & 4 \\
RA(R-SSD @M) (Gen Only) & \ranktwo{2} & \rankthree{3} & 5 & \rankone{1} & 4 \\
R-SSD @P(IC) (Gen Only) & \ranktwo{2} & \rankthree{3} & 5 & \rankone{1} & 4 \\

 &  &  &  &  &  \\
\hline
 &  &  &  &  &  \\
 
RA(R-FSD @M) (Prompt + Gen) & \rankthree{3} & 4 & 5 & \rankone{1} & \ranktwo{2} \\

RA(R-FSD @M) (Prompt + Gen)& \rankthree{3} & 4 & 5 & \rankone{1} & \ranktwo{2} \\

 R-SSD @P(IC) (Prompt + Gen)& \rankthree{3} & 4 & 5 & \rankone{1} & \ranktwo{2} \\
 R-SSD @P(IC) (Prompt + Gen) & \rankthree{3} & 4 & 5 & \rankone{1} & \ranktwo{2} \\
 &  &  &  &  &  \\

\Xhline{5\arrayrulewidth}



\end{tabular}}
\caption{ Toxicity Ranking Extended Results  }
\label{tab:toxicity}
\end{table}

\subsection{Ablation Study on toxicity Independent Versus Empirical Copula Portfolio} \label{app:ECvsIC}
When Comparing EC and IC portfolio aggregation using R-SSD to rank the LLM we see in Figures \ref{fig:ICversusEC40} and \ref{fig:ICversusEC20} that the two aggregation approaches lead to same ranking. While IC computational complexity is linear in the number of points , EC is quadratic. Given the correspondence in ranking IC is a more efficient aggregation technique.  
\begin{figure}[ht!]
\centering
\includegraphics[width=1.0\textwidth]{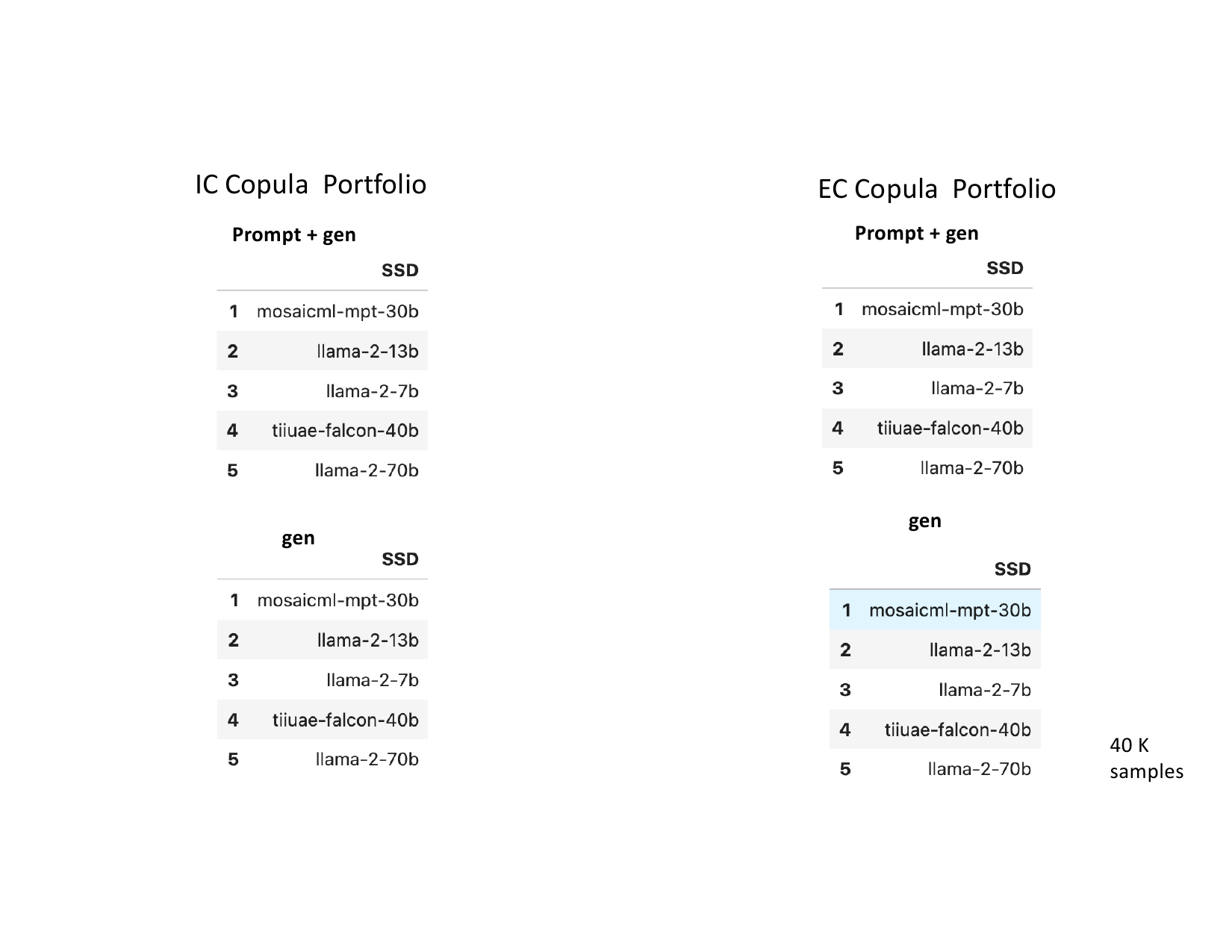}
\caption{IC versus EC Portfolio Aggregation on Toxicity. Ranking of models using 40 K samples, with independent and Empirical Copula portfolio with R-SSD. We see that the two aggregation methods lead to similar results. }
\label{fig:ICversusEC40}
\end{figure}

\begin{figure}[ht!]
\centering
\includegraphics[width=1.0\textwidth]{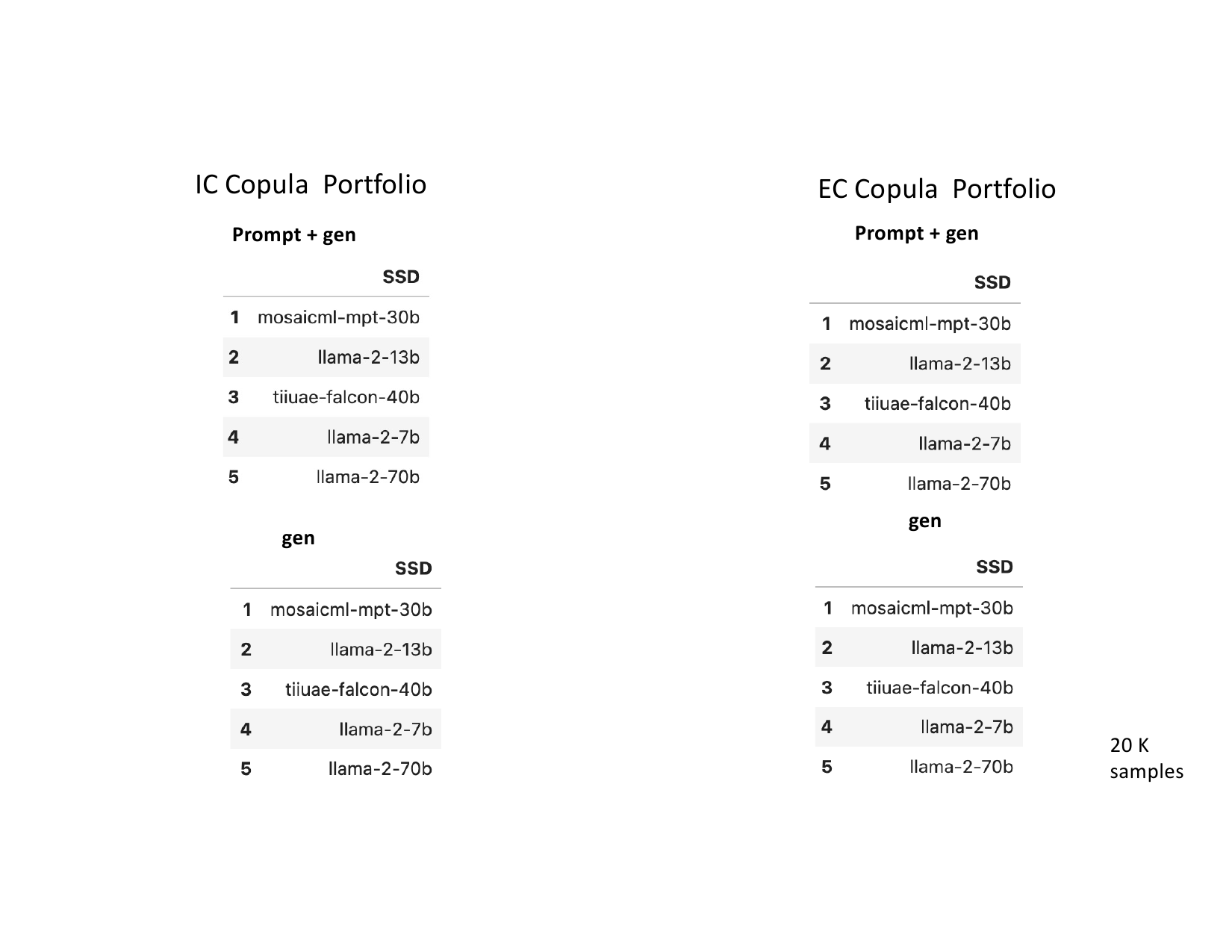}
\caption{IC versus EC Portfolio Aggregation on Toxicity. Ranking of models using 20 K samples, with independent and Empirical Copula portfolio with R-SSD. We see that the two aggregation methods lead to similar results.}
\label{fig:ICversusEC20}
\end{figure}

\subsection{Fat Left Tails of Metrics and Inconsistency  of Mean-Variance with SSD}

When metrics evaluated have fat tails, the Mean-Variance ranking can be inconsistent with the SSD. See Table \ref{tab:Mean-Var}.

\begin{table}[ht!]
\centering
\resizebox{\textwidth}{!}{\begin{tabular}{lccccc}

Scenario &  Llama 2 7b & Llama 2 13b & Llama 2 70b & MosaicML MPT 30b & Tiiuae Falcon 40b \\
&  &  &  &  &  \\
\toprule
&  &  &  &  &  \\
\textbf{Non Toxic Prompts} &  &  &  &  &  \\
&  &  &  &  &  \\
Identity Attack Metric &  &  &  &  &  \\
Gen evaluation &  &  &  &  &  \\
&  &  &  &  &  \\
 Mean - Sigma & \rankone{1} & \rankthree{3} & 4 & \ranktwo{2} & 5 \\
 Mean - Gamma  & \ranktwo{2} & \rankthree{3} & 4 & \rankone{1} & 5 \\
Mean - nTvAR   & \ranktwo{2} & \rankthree{3} & 4 & \rankone{1} & 5 \\
 SSD   & \ranktwo{2} & \rankthree{3} & 4 & \rankone{1} & 5 \\
\midrule

Threat Metric &  &  &  &  &  \\
Prompt + Gen evaluation &  &  &  &  &  \\
&  &  &  &  &  \\
 Mean - Sigma & \rankone{1} & \rankthree{3} & \ranktwo{2} & 4 & 5 \\
 Mean - Gamma  & \rankone{1} & \ranktwo{2} & \rankthree{3} & 5 & 4 \\
Mean - nTvAR   & \rankone{1} & \ranktwo{2} & \rankthree{3} & 5 & 4 \\
 SSD   & \rankone{1} & \ranktwo{2} & \rankthree{3} & 5 & 4 \\

\end{tabular}}
\caption{Inconsistency of Mean - Sigma on Toxicity Metrics with SSD and other mean-risk models. This is a due to the fact the metric evaluated may a have a fat left tail see Figures \ref{fig:Toxicity_toxic_prompt_gen_identity_attack_distribution} and \ref{fig:Toxicity_nontoxic_gen_identity_attack_distribution}. }
\label{tab:Mean-Var}
\end{table}

\begin{figure}[ht!]
\centering
\includegraphics[scale=0.6]{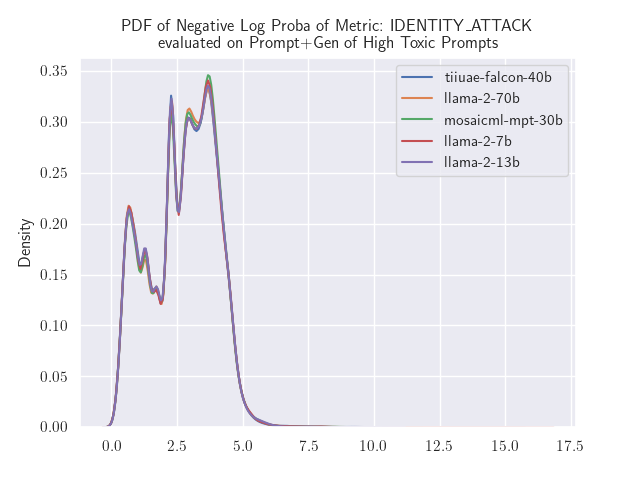} 
\caption{Identity Attack Metric distribution computed on Prompt+Generation output of Highly Toxic Prompts}
\label{fig:Toxicity_toxic_prompt_gen_identity_attack_distribution}
\end{figure}

\begin{figure}[ht!]
\centering
\includegraphics[scale=0.6]{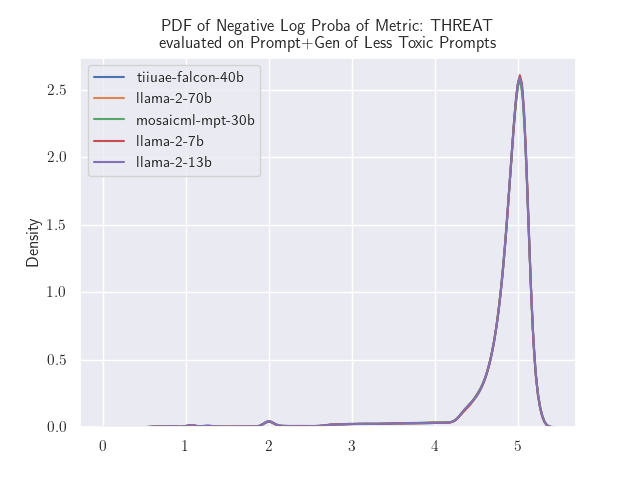} 
\caption{Threat Metric distribution computed on Prompt+Generation output of Less Toxic Prompts}
\label{fig:Toxicity_toxic_prompt_gen_threat_distribution}
\end{figure}

\begin{figure}[ht!]
\centering
\includegraphics[scale=0.6]{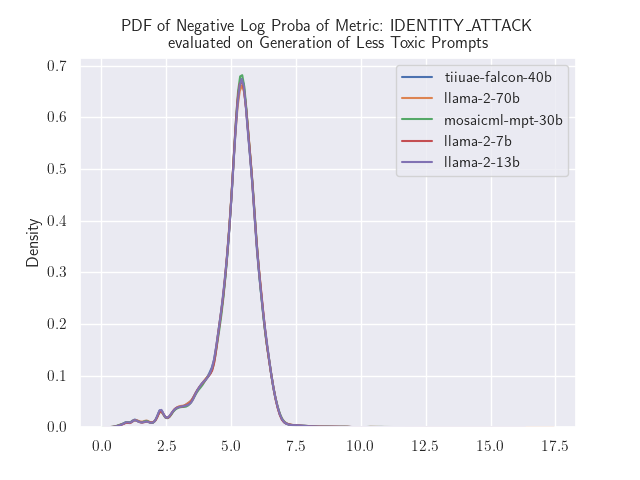} 
\caption{Identity Attack Metric distribution computed on Generation output of Less Toxic Prompts}
\label{fig:Toxicity_nontoxic_gen_identity_attack_distribution}
\end{figure}




\end{document}